  \newcommand{\supplementarymaterial}{\par
	  \setcounter{section}{0}%
	  \setcounter{subsection}{0}%
	  \gdef\thesection{S.\arabic{section}}
  }
\def\REAL{{\mathbb{R}}}
\def\bt{\mathbf{t}}
\def\bu{\mathbf{u}}
\def\bx{\mathbf{x}}
\def\vone{{\mathbf{1}}}
\def\vzero{{\mathbf{0}}}
\def\bbeta{{\boldsymbol{\beta}}}
\def\bEta{{\boldsymbol{\eta}}}
\def\bgamma{{\boldsymbol{\gamma}}}
\def\btheta{{\boldsymbol{\theta}}}
\def\bzeta{{\boldsymbol{\zeta}}}
\def\bomega{{\boldsymbol{\omega}}}
\def\bphi{{\boldsymbol{\phi}}}
\def\brho{{\boldsymbol{\rho}}}
\def\bmu{{\boldsymbol{\mu}}}
\newcommand{\EX}{\mathbb{E}}
\newcommand{\POSP}[1]{ { \left[ {#1} \right]_{+} } }
\newcommand{\indicator}{\mathds{1}} 
\DeclareMathOperator*{\argmax}{arg\,max}
\DeclareMathOperator*{\sign}{sign}
\theoremstyle{plain}
\newtheorem{thm}{Theorem}[section]
\newtheorem{lem}[thm]{Lemma}
\newtheorem{cor}[thm]{Corollary}
\renewcommand*\env@matrix[1][*\c@MaxMatrixCols c]{%
  \hskip -\arraycolsep
  \let\@ifnextchar\new@ifnextchar
  \array{#1}}
\def\const{\text{const}}
\newenvironment{ENUM}{
\begin{enumerate}
  \setlength{\itemsep}{1pt}
  \setlength{\parskip}{0pt}
  \setlength{\parsep}{0pt}
}{\end{enumerate}}
\title{A Bayesian Boosting Model}
\author{
Alexander~Lorbert$^1$, David~M.~Blei$^2$, Robert~E.~Schapire$^2$ and Peter~J.~Ramadge$^1$\\
Departments of Electrical Engineering$^1$ \& Computer Science$^2$\\
Princeton University\\
\texttt{\{alorbert,blei,schapire,ramadge\}@princeton.edu} \\
}
\begin{document}

\maketitle

\begin{abstract} 
We offer a novel view of AdaBoost in a statistical setting.
We propose a Bayesian model for binary classification in which label noise is modeled hierarchically.
Using variational inference to optimize a dynamic evidence lower bound, we derive a new boosting-like algorithm called \textit{VIBoost}.   
We show its close connections to AdaBoost and give experimental results from four datasets.
\end{abstract} 

\section{Introduction} \label{sec:intro}

Boosting, and in particular AdaBoost \cite{schapire2012,freund1996,freund1997}, is an effective method of aggregating classifiers. 
AdaBoost produces a reliable binary classifier and often avoids overfitting.
Nevertheless, it can be sensitive to ``noisy'' data and may severely underperform as a result.
In this paper, we embed binary classification in a Bayesian model and show how it interfaces with the boosting paradigm.
With this model, we can address the vulnerability to noise in a principled way.

Real-world data will almost always include noise, even with binary labels.
In the U.S. Presidential election of 2000, the country was kept in suspense for more than a month while votes were recounted in the state of Florida.
During the recount it had become apparent that the use of the ``butterfly ballot'' had confused voters \cite{toobin2002}.
Probabilistically, we can model a confused voter as one who casts a vote that is \textit{independent} of his/her actual intention.
These votes---borne out of confusion---are considered ``noisy'' and any attempt to learn a voter-to-vote connection, e.g., via boosting, becomes difficult.
However, this does not preclude the extraction of important noise information.
If we can \textit{detect and quantify} the noise properties of a given dataset, then it should be reflected in our expectations of constructing a good classifier.  

In addressing label noise, we have chosen to interpret aggregating classifiers in a fully-Bayesian model.
Once in place, this model lets us incorporate additional latent variables to account for noise.
In our context, noise means that the true label is ignored and randomly reassigned, i.e., it may be inverted.
A learning algorithm such as AdaBoost is sensitive to this type of label perturbation because it focuses on the examples that pose a greater difficulty of classification.
Using this augmented model, we construct an algorithm that performs approximate inference of the posterior distribution associated with the latent variables.
Although the intent is inference, the algorithm is able to produce a binary classifier accompanied by noise statistics that reflect the quality of the learned classifier.
We also show that the algorithm---in its simplest form---reduces to a smoothed version of AdaBoost. 

In developing a Bayesian model for aggregating binary classifiers, we begin with the logistic regression model proposed by \cite{friedman2000}.
Given a set of base classifiers, the latent variables of the model are the weights placed on these base classifiers. 
We then introduce variables to account for label perturbations.
Finally, we use variational inference to estimate the posterior distributions.

Our ideas lead to a new boosting-like algorithm called \textit{VIBoost}---boosting stemming from \underline{v}ariational \underline{i}nference. 
AdaBoost employs a greedy search for incorporating new base classifiers.
Similarly, in VIBoost each main-loop iteration introduces a new base classifier, which induces a new model.
With this new model, variational inference is applied using previous values for a warm start.  
In the process, noise statistics are cultivated.
Our experiments reveal that VIBoost performs on par with AdaBoost and supplies meaningful characterizations of the label perturbations.

Much has been done to cast boosting in a statistical setting.
Friedman~et~al.~\cite{friedman2000} leveraged the logistic regression model and then used a functional gradient to derive a boosting update.
Collins~et~al.~\cite{collins2002} used information geometry to derive AdaBoost and algorithms emerged with exponential and logistic loss objectives. 
Lebanon~\&~Lafferty~\cite{lebanon2002} solidified the relationship between AdaBoost and maximum likelihood via duality.
These ideas led to a Bayesian perspective of boosting and provided a way to incorporate prior knowledge \cite{schapire2002}.

There have been many approaches for handling noise.
For example, Servedio addressed label noise in a PAC learning framework and developed SmoothBoost \cite{servedio2003}. 
Through a statistical formulation, Krause~\&~Singer~\cite{krause2004} addressed noise in the context of symmetric, random label inversions, and devised algorithms to alleviate the resulting adverse effects.
In one of these algorithms they used expectation maximization to construct a classifier while simultaneously updating a noise parameter.
Building on this work, we use variational inference and address label noise in the process.

The paper is organized as follows:
the initial groundwork for the Bayesian model is given in \S\ref{sec:core}.
In \S\ref{sec:logistic} we introduce two probability distributions that will play a role in the model. 
The proposed model is presented in \S\ref{sec:model} and variational inference is applied in \S\ref{sec:vi}.
We discuss the connection to AdaBoost in \S\ref{sec:adaboost}.
We give experimental results in \S\ref{sec:exp} and we conclude in \S\ref{sec:con}.

\section{The Core Model} \label{sec:core}

In the binary classification problem we are given a set of $N$ labeled examples $\{(\bx_n,y_n)\}_{n=1}^N$.
Each example is an element of some space $\mathcal{X}$ and the labels are elements of $\{-1,+1\}$.
In addition to the labeled examples, we also have a set of $M$ base classifiers $\mathcal{F} = \{f_1,\ldots,f_M\}$.
Each element of $\mathcal{F}$ is a function that maps $\mathcal{X}$ to $\{-1,+1\}$.
Additionally, we assume that (a) $h \in \mathcal{F} \Rightarrow -h \notin \mathcal{F}$ and (b) $h_1, h_2 \in \mathcal{F} \Rightarrow \exists \ i \in \{1,\ldots,N\}$ such that $h_1(\bx_i) \neq h_2(\bx_i)$.
These assumptions ensure a finite number of classifiers and prevent identifiability problems.

For a fixed $\bx \in \mathcal{X}$, suppose the logarithm of the ``$+1$''-to-``$-1$'' label odds is given by $F(\bx) = \log \frac{ p(y=+1 \mid \bx) }{ p(y=-1 \mid \bx) }$.
Thus, we can form the conditional probability mass function for the labels as ${ p(y \mid \bx, F) = \frac{ 1 }{ 1 + \exp(-y F(\bx)) } }$.
A label sampled in this way shall be called a \textit{true} label.
Logistic regression models the spatially-variant log-odds-ratio as a weighted sum over all base classifiers, i.e.,  $F(\bx) = \sum_{m=1}^M c_m f_m(\bx)$. 

Consider a model defined by the following generative process:
\begin{samepage}
\begin{ENUM} 
  \item Draw $c_m \stackrel{\text{iid}}{\sim} \mathcal{P}_C$ \ ($m=1,\ldots,M$) \ .
  \item Construct $F = \sum_{m=1}^M c_m f_m$ \ .
  \item Draw $\bx_{1:N}$ independently according to some distribution over $\mathcal{X}$ \ .
  \item Draw $y_n \in \{-1,+1\}$ independently according to ${  p(y_n \mid \bx_n, F) = \frac{1}{1 + \exp[-y_n F(\bx_n) ]} }$ \ . 
\end{ENUM}
\end{samepage}
The graphical model is shown in Figure~\ref{fig:core}.

The latent variables are the base classifier weights $c_{1:M}$.
Using the labeled examples, we seek the posterior distribution over the weights.
A similar approach was posed by Minka \cite{minka2001} with the Bayes Point Machine \cite{herbrich2001}.  
In contrast with our work, the author considered (i) a linear classifier without the notion of base classifiers, (ii) expectation propagation as opposed to variational inference, and (iii) a Gaussian prior for the weights.

The posterior distribution over the weights reflects a compromise of the observed data (\,$\{\bx_n,y_n\}_{n=1}^N$\,) with our prior beliefs (\,$\mathcal{P}_C$\,).
It also has the potential of yielding a classifier via the $M$-dimensional mean or mode, for example.
Combining prior beliefs with observed data is made easier through conjugacy, which is how we propose a distribution for $\mathcal{P}_C$. 
This is the subject of the next section.


\begin{figure}[t!]
\begin{minipage}[b]{0.48\linewidth}
\centering
                \includegraphics[width=0.50\columnwidth]{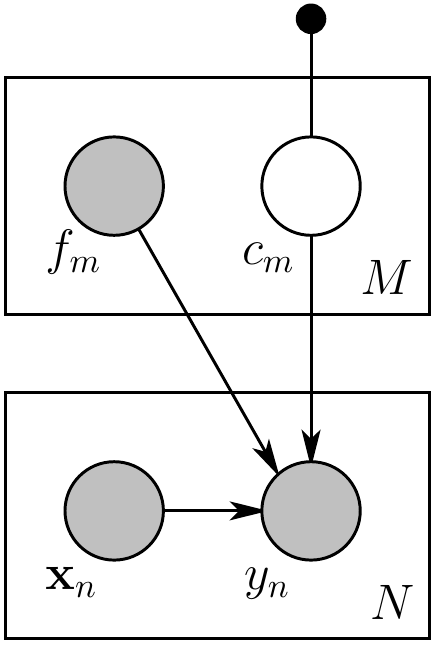}
                \caption{The core graphical model for the boosting problem. Each label depends on the example and log-odds-ratio function. The only latent variables are the base classifier weights.}
                \label{fig:core}
\end{minipage}
\hspace{0.4cm}
\begin{minipage}[b]{0.48\linewidth}
\centering
                \includegraphics[width=1.0\columnwidth]{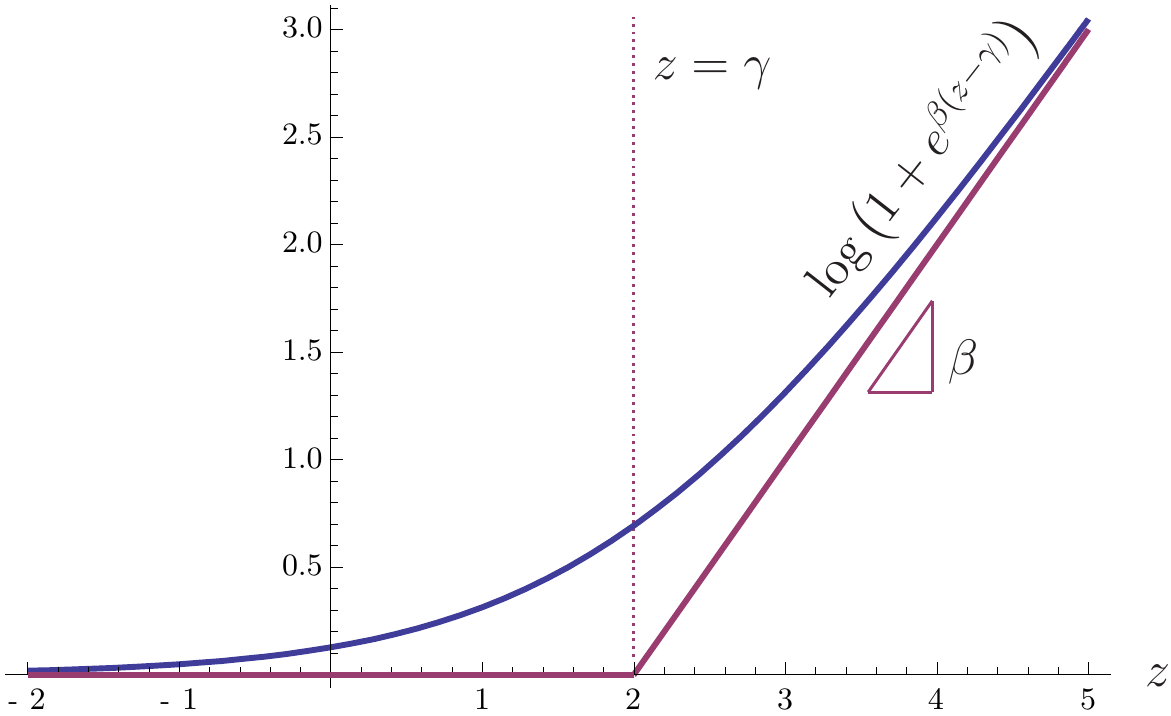}
                \caption{For a given versatile logistic with unit multiplicities, the negative logarithm of one of the product terms is shown above ($\beta\!=\!1$ , $\gamma\!=\!2$). Each curve of this form is tightly lower-bounded by a piecewise-linear function with one knot at $z\!=\!\gamma$. The slopes are $0$ and $\beta$.}
                \label{fig:log1pez}
\end{minipage}
\end{figure}

\section{The Versatile Logistic \& Binary Logistic Distributions} \label{sec:logistic}

We use two conjugate distributions to specify the model described in Figure~\ref{fig:core}.
The first distribution is used as a prior for the weights $c_{1:M}$, and the second is associated with label generation.
For vectors $\bbeta,\bgamma \in \REAL^K$ and $\bmu \in \REAL^K_+$, we define the density over the reals
\begin{align}
  p(z) &\propto \textstyle{\prod_{k=1}^K} \left( \frac{1}{1+\exp[\beta_k (z - \gamma_k)]} \right)^{\mu_k}
  \label{eq:vlog}
\end{align}
to be the \textit{Versatile Logistic Distribution}---written $\text{v-Log}(\bbeta,\bgamma,\bmu)$--- with \textit{slope vector} $\bbeta$, \textit{knot vector} $\bgamma$, and \textit{multiplicity vector} $\bmu$.
Figure~\ref{fig:log1pez} provides the motivation behind this nomenclature.
Define $\bar{\bu} \triangleq \left[ \begin{smallmatrix} +1 \\ -1 \end{smallmatrix} \right] \in \REAL^2$. 
A familiar density is $\text{v-Log}(\bar{\bu},\vzero,\vone)$, which is a logistic distribution.

The density described in \eqref{eq:vlog} is valid if and only if there exists both a positive and negative slope with corresponding positive multiplicity.
Consequently, we must have $K \geq 2$.
Additionally, this distribution is unimodal, so it is reasonable to estimate its mean with an approximate mode.
We prove these facts in \S\ref{supp:sec:vlog}. 
The product represented in \eqref{eq:vlog} relates to a Product of Experts \cite{hinton2002}; however, each factor by itself does not correspond to a valid density.

We now define a probability mass function for the binary random variable $Y$ taking values in $\{-1,+1\}$. 
For scalars $z$, $\beta$, and $\gamma$ we define 
\begin{align}
  p(y) &= \frac{1}{1 + \exp[-y \beta (z - \gamma) ]}
  \label{eq:binarylogistic}
\end{align}
to be the corresponding \textit{Binary Logistic Distribution}, written $\text{b-Log}(z,\beta,\gamma)$.
In comparing \eqref{eq:binarylogistic} to label generation in our model, we see that $\beta$ and $\gamma$ encode base classifier information.   
The versatile logistic and binary logistic are conjugate in the following way:
if $z \sim \text{v-Log}(\bbeta,\bgamma,\bmu)$ and $y_n | z \sim \text{b-Log}(z,\theta_n,\phi_n)$---drawn independently for $n=1,\ldots,N$---then the posterior of $z$ given $y_{1:N}$ is also a versatile logistic with parameters
\begin{align}
  \bbeta' &= [ \beta_1,\ldots,\beta_K,-y_1\theta_1,\ldots,-y_N\theta_N]^T \in \REAL^{K+N} \\
  \bgamma' &= [ \gamma_1,\ldots,\gamma_K,\phi_1,\ldots,\phi_N]^T \in \REAL^{K+N} \\
  \bmu' &= [ \mu_1,\ldots,\mu_K,1,\ldots,1]^T \in \REAL^{K+N} \ .
\end{align}
In the binary classification problem, the b-Log--v-Log conjugacy relationship helps with posterior inference.
By construction, the posterior distribution of the weights is a versatile logistic.



\begin{figure}[t!]
\begin{minipage}[b]{0.48\linewidth}
\centering
                \includegraphics[width=0.95\columnwidth]{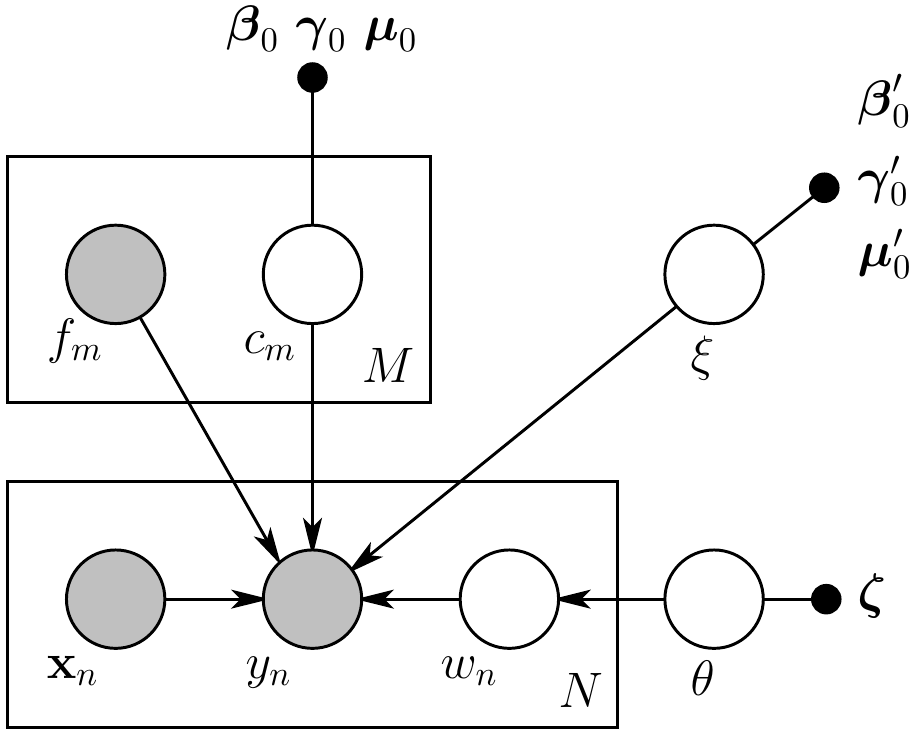}
                \caption{The graphical model for Bayesian boosting.}
                \label{fig:gm}
\end{minipage}
\hspace{0.4cm}
\begin{minipage}[b]{0.48\linewidth}
\centering
                \includegraphics[width=0.95\columnwidth]{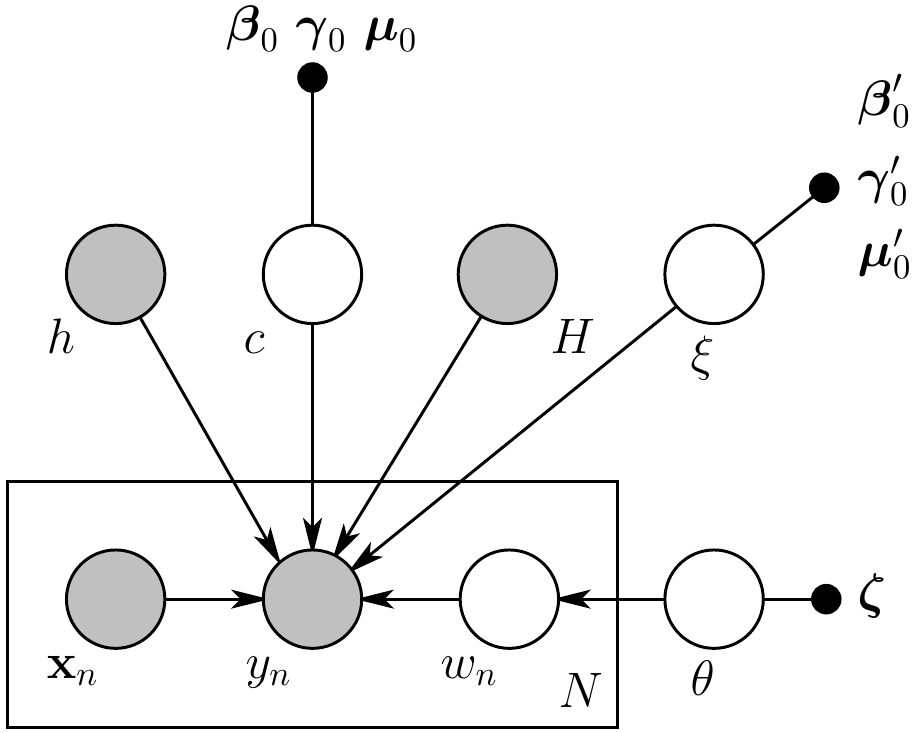}
                \caption{The dynamic graphical model for Bayesian boosting.}
                \label{fig:gmu}
\end{minipage}
\end{figure}

\section{Incorporating Noise} \label{sec:model}

We will now build upon the model presented in \S\ref{sec:core}.
Suppose we fix an instance $\bx \in \mathcal{X}$ and repeatedly generate labels from $p(y \mid \bx , F)$.
If the labels are \textit{true}, then the empirical ratio of plus-to-minus labels will converge to $\exp[F(\bx)]$, i.e., the odds ratio.

On the other hand, according to our model, if the labels are \textit{noisy}, the empirical odds ratio converges to some fixed value, which is independent of $\bx$. 
Let $e^\xi$ be this noise-related odds ratio.
Equivalently, $\xi$ is an instance-independent, static log-odds-ratio, which we refer to as the \textit{noise grade}.
For example, noise grades of $-\infty$, $0$, and $+\infty$, translate to random label assignments of $+1$ with probability $0$, $1/2$, and $1$, respectively.

Let $w$ take on values in $\{0,1\}$ and encode whether a label is true or noisy.
We can merge the two label types into the following conditional label probability:
\begin{align}
  \!\!\!
  p(y \mid w, \bx, \xi, F) &= \frac{1}{1+\exp[-y(w F(\bx) + (1{-}w) \xi)]}
  = \frac{w}{1+\exp[-y F(\bx)]} + \frac{1{-}w}{1 + \exp[-y \xi]} .
  \label{eq:merge}
\end{align}
The role of $w$ selects the label type: true or noisy.
Treating $w$ as a latent variable, we embellish the model of \S\ref{sec:core}:
\begin{samepage}
\begin{ENUM} 
  \item Draw $c_m \stackrel{\text{iid}}{\sim} \text{v-Log}(\bar{\bu},\vzero,\mu_0 \vone)$\ ($m=1,\ldots,M$) \ .
  \item Construct $F = \sum_{m=1}^M c_m f_m$ \ .
  \item Draw $\bx_{1:N}$ independently according to some distribution over $\mathcal{X}$ \ .
  \item Draw $\xi \sim \text{v-Log}(\bar{\bu},\vzero,\mu_0' \vone)$ \ .
  \item Draw $\theta \sim \text{Beta}(\zeta_1,\zeta_2)$ \ \ \ ($\bzeta \in \REAL^2_+$) \ .
  \item Draw $w_n | \theta \stackrel{\text{iid}}{\sim} \text{Bernoulli}(\theta)$ \ ($n=1,\ldots,N$) \ .
  \item Draw $y_n \in \{-1,+1\}$ independently according to
	\begin{align}
	  p(y_n \mid w_n, \bx_n, \xi, F) &=  \frac{1}{1+\exp[-y_n(w_{n}F(\bx_n) + (1{-}w_{n})\xi)]}  \ .
	\end{align}
\end{ENUM}
\end{samepage}
There is now a prior assigned to $c_m$, and the new steps (4-6) model noise. 
The graphical model is depicted in Figure~\ref{fig:gm}.
Although not immediately apparent, this model subsumes label inversion as a form of noise (\S\ref{supp:sec:inversion}).

From the classification standpoint, the primary latent variables of the above generative process are still $c_{1:M}$, or the \textit{weights}.
The latent $w_{1:N}$, or \textit{type selectors}, are responsible for the type of label generated. 
They are drawn independently from $\theta$, the \textit{type prior}.
We can reason that $\theta/(1-\theta)$ represents a signal-to-noise ratio (SNR).
This stems from the expected value of $N\theta$ true labels and $N(1{-}\theta)$ noisy labels (see \S\ref{supp:sec:inversion} for full details).
Alternatively, we can use the prior of $\theta$ for the SNR estimate, yielding $\EX\{\Theta\}/\EX\{1{-}\Theta\} = \zeta_1/\zeta_2$.

\section{Variational Inference} \label{sec:vi}

With a Bayesian model in place, our focus turns to the posterior distribution of the latent variables.
This allows us to construct a classifier by estimating the mean or mode of the posterior weights ($c_m$).
We accomplish this with stagewise variational inference.\footnote{The common approach in this situation is to use a Gibbs sampler. The Gibbs sampler for the graphical model of Figure~\ref{fig:gm} is given in \S\ref{supp:sec:gibbs}. We used Adaptive Rejection Sampling \cite{gilks1992} to sample a v-Log. In practice, this approach was too time consuming, which is why we turned to inference.}

\paragraph{Full Variational Inference.}

Before motivating our stagewise approach, we first review variational inference.
In our model we have the observed variables $\langle \mathcal{F} , \bx_{1:N} , y_{1:N} \rangle$ and the latent variables $\langle c_{1:M} , \xi , w_{1:N} , \theta \rangle$.
We are interested in the posterior $p(c_{1:M} , \xi , w_{1:N} , \theta \,|\, \mathcal{F} , \bx_{1:N} , y_{1:N} )$, which is proportional to the joint $p(c_{1:M} , \xi , w_{1:N} , \theta , \mathcal{F} , \bx_{1:N} , y_{1:N} )$.
In variational inference, we introduce a distribution $q(c_{1:M} , \xi , w_{1:N} , \theta)$ to bound the log of the marginal probability of the observations \cite{wainwright2008},
\begin{align}
  \!\!
   \log  p(  \mathcal{F} , \bx_{1:N} , y_{1:N} \! ) 
	  &\!\geq\! \textstyle{\int} q(c_{1:M} , \xi , w_{1:N} , \theta) \log p(c_{1:M} , \xi , w_{1:N} , \theta , \mathcal{F} , \bx_{1:N} , y_{1:N} \! ) \text{d} c_{1:M} \text{d} \xi \text{d} w_{1:N} \text{d} \theta \notag \\ 
  & \qquad \quad - \textstyle{\int} q(c_{1:M} , \xi , w_{1:N} , \theta) \log q(c_{1:M} , \xi , w_{1:N} , \theta) \text{d} c_{1:M} \text{d} \xi \text{d} w_{1:N} \text{d} \theta \label{eq:elbo}
				  \ .
\end{align}
The right-hand side of \eqref{eq:elbo} is referred to as the \underline{e}vidence \underline{l}ower \underline{bo}und (ELBO).
Using the KL-divergence, we can also write
\begin{align}
\log  p(  \mathcal{F} , \bx_{1:N} , y_{1:N} \! ) = \textsc{kl}(\ q(c_{1:M} , \xi , w_{1:N} , \theta) \parallel \text{posterior}\ ) + \textsc{elbo} \ .
\end{align}
The KL divergence provides a measure of closeness between the auxiliary distribution and the posterior.
We maximize the ELBO with respect to the parameters of $q$, thereby minimizing the KL divergence to the posterior.
We use \textit{mean-field variational inference}, i.e., we assume a factorized $q$: 
\begin{align}
  q(c_{1:M} , \xi , w_{1:N} , \theta) = \textstyle{\prod_{m=1}^M} q(c_m) \cdot q(\xi) \cdot \textstyle{\prod_{n=1}^N} q(w_n) \cdot q(\theta) \ .
  \label{eq:factor}
\end{align}
Each component of the factorized variational distribution has a form and variational parameters.
For example, a reasonable form of $q(c_m)$ is a versatile logistic with variational parameters given by some slope, knot and weight vectors.
Typically, we optimize the parameters with coordinate ascent, updating each in turn, holding the others fixed.  
In our model, this yields the following updates \cite{bishop2006}:
\begin{align}
  \log q^*(c_m) &\leftarrow \EX_q [ \log p( C_{1:m-1}, c_m, C_{m+1:M}, \Xi, W_{1:N}, \Theta, \mathcal{F}, \bx_{1:N}, y_{1:N} ) ] +\const \label{eq:qc} \\
  \log q^*(\xi) &\leftarrow \EX_q [ \log p( C_{1:M}, \xi, W_{1:N}, \Theta, \mathcal{F}, \bx_{1:N}, y_{1:N} ) ] +\const \\
  \log q^*(w_n) &\leftarrow \EX_q [ \log p( C_{1:M}, \Xi, W_{1:n-1}, w_n, W_{n+1:N}, \Theta, \mathcal{F}, \bx_{1:N}, y_{1:N} ) ] +\const \\
  \log q^*(\theta) &\leftarrow \EX_q [ \log p( C_{1:M}, \Xi, W_{1:N}, \theta, \mathcal{F}, \bx_{1:N}, y_{1:N} ) ] +\const  \ .
\end{align}
Each term on the right is a leave-one-out expectation over the latent variables, resulting in a function of the corresponding left-out latent variable.
Running the variational inference algorithm repeatedly cycles through these updates.


This algorithm is not convenient.
The chosen form of the approximate posterior weight distributions is a versatile logistic.
From conjugacy, the number of parameters required to specify each distribution is linear in the number of examples ($N$).
Additionally, we hope to use a large number of base classifiers, even for small datasets.
Thus, for our classification problem, cycling through all auxiliary weight distributions is impractical because integrating over the weights is too much of a computational burden.

\begin{algorithm}[tb]
   \caption{VIBoost}
   \label{alg:viboost}
\begin{algorithmic}
   \STATE {\bfseries Input:} $\{(\bx_n,y_n)\}_{n=1}^N$, $\mathcal{F}$, $\mu_0\in\REAL_+$, $\mu_0'\in\REAL_+$, $\bzeta\in\REAL_+^2$
   \STATE Initialize $H : \mathcal{X} \to \{-1,+1\}$ to the zero function
   \STATE Initialize $\bEta \in \REAL_+^2$, $\bomega \in \REAL_+^2$, and $\bphi \in [0,1]^N$
   \STATE Define $\bbeta(h) \triangleq [+1,-1,-y_1 h(\bx_1),\ldots,-y_N h(\bx_N)]^T$
   \STATE Define $\bgamma(H,h) \triangleq [0,0,-H(\bx_1) h(\bx_1),\ldots,-H(\bx_N) h(\bx_N)]^T$
   \STATE Define $\bmu(\bphi) \triangleq [\mu_0,\mu_0,\phi_{1},\ldots,\phi_{N}]^T$
   \FOR{$t=1$ {\bfseries to} $T$}
   \vskip 0.4ex
   \STATE $h_t \leftarrow \argmax_{h \in \mathcal{F}} \ \vert \text{Mode}[\text{v-Log}(\bbeta(h),\bgamma(H,h),\bmu(\bphi))] \vert$
   \vskip 0.4ex
   \WHILE{\textsc{ELBO} increases significantly}
   \vskip 0.4ex
   \STATE $\alpha_t \leftarrow \text{Mode}[\text{v-Log}(\bbeta(h_t),\bgamma(H,h_t),\bmu(\bphi))]$
   \vskip 0.4ex
   \STATE $\omega_1 \leftarrow \mu_0' + \textstyle{\sum_{n=1}^N} (1-\phi_{n}) \indicator\{y_n = -1\}$ 
   \vskip 0.4ex
   \STATE $\omega_2 \leftarrow \mu_0' + \textstyle{\sum_{n=1}^N} (1-\phi_{n}) \indicator\{y_n = +1\}$ 
   \vskip 0.4ex
   \STATE $\kappa_n \stackrel{1:N}{\leftarrow} \displaystyle{\frac{ \exp\left[ \uppsi(\eta_1) - \uppsi(\eta_2) + \uppsi(\omega_0) - \uppsi(\omega_2)\indicator{\{y_n=+1\}} - \uppsi(\omega_1)\indicator{\{y_n=-1\}} \right] }{ 1 + \exp[-y_n(H(\bx_n) + \alpha_t h_t(\bx_n))]}}$
   \vskip 0.4ex
   \STATE $\phi_{n} \stackrel{1:N}{\leftarrow} \kappa_n / (1 + \kappa_n)$
   \vskip 0.4ex
   \STATE $\eta_1 \leftarrow \zeta_1 + \textstyle{\sum_{n=1}^N} \phi_{n}$
   \vskip 0.4ex
   \STATE $\eta_2 \leftarrow \zeta_2 + \textstyle{\sum_{n=1}^N} (1-\phi_{n})$
   \vskip 0.4ex
   \ENDWHILE
   \STATE $H \leftarrow H + \alpha_t h_t$
   \ENDFOR
   \STATE {\bfseries Output:} classifier\ \ $\sign\{H(\cdot)\}$
\end{algorithmic}
\end{algorithm}

\paragraph{Stagewise Variational Inference.}

To address these issues, we propose a dynamic model over the current static one: with a current estimate of $F = \sum_m c_m f_m$, we introduce a single base classifier and then run variational inference on the latent $\langle c,\xi, w_{1:N}, \theta \rangle$.
The regression counterpart would be Forward Stagewise Regression, a greedy algorithm which finds a sparse subset of covariates and is structurally similar to AdaBoost \cite{friedman2009}.

In each main loop iteration, let $H(\cdot)$ be the current estimate of the true log-odds-ratio $F(\cdot)$ and suppose we have a ``promising'' candidate $h \in \mathcal{F}$ that we wish to merge with $H$.
This promising classifier is found greedily (details below) and, once found, becomes a fixed variable in the model.
Now, rather than $M$ latent weights, we have a single latent weight $c$ corresponding to $h$.
Every update of $H$ induces a new model to which we apply variational inference.
This new, time varying graphical model is featured in Figure~\ref{fig:gmu}.

Let $\mathcal{D}$ denote the evidence, i.e., the observed variables $\bx_{1:N}$, $y_{1:N}$, $H$, and $h$.
At each stage we assume the following distributions:
\begin{align}
  p(c \mid \mathcal{D}) &\approx q( c \mid \bbeta,\bgamma,\bmu ) 
                        \sim \text{v-Log}(\bbeta,\bgamma,\bmu) 
						&
  p(\xi \mid \mathcal{D}) &\approx q( \xi \mid \bomega )
                        \sim \text{v-Log}(\bar{\bu},\vzero,\bomega) \\
  p(w_n \mid \mathcal{D}) &\approx q( w_n \mid \phi_n )
                        \sim \text{Bernoulli}(\phi_n) 
						&
  p(\theta \mid \mathcal{D}) &\approx q( \theta \mid \bEta ) 
                        \sim \text{Beta}(\bEta) \ .
\end{align}

The variational updates and the ELBO are derived in \S\ref{supp:sec:vi} and \S\ref{supp:sec:elbo}, respectively.
The general approach is to isolate the terms of the log-likelihood that feature the variable of interest---all other terms will cancel after normalization and are extraneous.
We then take expectations and attempt to identify a distribution.

The resulting algorithm, \textit{VIBoost}, is presented in Algorithm~\ref{alg:viboost} (\ $\uppsi(\cdot)$ is the digamma function ).
Possible modifications include (a) fixing the number of variational inference iterations so that ELBO calculations are avoided, and (b) setting the $\alpha_t$ once and skipping its update in the variational inference loop.

We greedily select the next base classifier by finding the v-Log posterior with maximal mode.
A large mode suggests that the corresponding weight possesses discriminative classification strength.
We opted for the mode rather than the mean; we now justify this choice.

A v-Log distribution with more than two slope/knot/multiplicity terms has the advantage of being a one-dimensional density, but is cumbersome when evaluating statistics of interest.
Computing the normalization constant, mode and mean require iterative techniques, which can bog down any algorithm.
However, if we replace $\mu_k \log(1 + e^{\beta_k(z-\gamma_k)})$, a summand of the log-density, with the single-tail approximation $\mu_k e^{\beta_k \tau (z - \gamma_k)}$  ($\tau > 0$) we arrive at the modal estimate of
\begin{align}
  \textstyle{
  \alpha = 
  \frac{1}{2\tau\beta} \log \left( \frac{ 
           \sum_{k:\beta_k<0} \mu_k e^{\tau\beta\gamma_k}
     }{ 
           \sum_{k:\beta_k>0} \mu_k e^{-\tau\beta\gamma_k}
		   } \right) } \ ,
		\label{eq:appxmode}
\end{align}
where $\beta = \vert \beta_k \vert$ is constant (\S\ref{supp:sec:cupdate} and \cite{friedman2000}).
For a unimodal distribution, this closed-form expression provides an efficient way of estimating expectations.
Thus, in avoiding numerical integration, Algorithm~\ref{alg:viboost} is performing approximate variational inference.


\section{Relation to AdaBoost} \label{sec:adaboost}

We now compare our algorithm to AdaBoost.
Consider the simpler model of \S\ref{sec:core}, a true-label dataset with prior assignments (Figure~\ref{fig:core}).
This leaves the greedy step of finding the maximal mode in Algorithm~\ref{alg:viboost} and updating $H$ without the variational inference.
We now investigate the approximation supplied by \eqref{eq:appxmode} with $\mu_0 = \tau=1$.
Let $Z=\sum_{n=1}^N e^{-y_n H(\bx_n)}$ and $d_n=e^{-y_n H(\bx_n)} /Z$ so that $\sum_{n=1}^N d_n=1$. 
The approximate mode $\alpha$ is
\begin{align}
  \textstyle{
  \frac{1}{2}
  \log \left( \frac{ 1/Z + (1 - \varepsilon) }{ 1/Z + \varepsilon } \right) }
  \ ,  \label{eq:1oz}
\end{align}
where $\varepsilon =  \sum_{n} d_n \indicator\{h(\bx_n) \neq y_n\} $ is a weighted error ascribed to the new classifier (\S\ref{supp:sec:modeappx}).
When compared to AdaBoost, the update is identical when the $1/Z$ term is not present. 
Effectively, the $1/Z$ term results in a shrinkage of the assigned weights (see Figure~\ref{fig:1Z}).

The variable $Z$ is equal to the current exponential loss.
If $Z$ is small then $1/Z$ is large which leads to a dampened weight assignment (and vice versa).
Assuming the exponential loss decreases with more iterations, the algorithm acts like AdaBoost early on and then becomes more conservative with each iteration.
Quinlan \cite{quinlan1996} incorporated similar smoothing for AdaBoost and described it as ``necessarily ad-hoc''.
In the proposed model, this smoothing results from the prior assignment.
We also note that AdaBoost selects the base classifier that minimizes $\varepsilon$.
From \eqref{eq:1oz}, this coincides with the largest approximation-based mode.

The slopes $\bbeta(h)$ as defined in Algorithm~\ref{alg:viboost} contain individual [mis]matches of the base classifier with the labels, whereas the knots contain individual, weighted [mis]matches of the base classifier with the current log-odds-ratio estimate.
The prior effectively augments the data by inserting two phantom examples.
Each example lies in the zero level set of $H$ as indicated by a knot of $0$ ($H(\bx)h(\bx) = 0 \Rightarrow H(\bx) = 0$).
The slopes of $\pm 1$ presume that the base classifier succeeds in correctly labeling one of the pseudo-examples, while failing with the other. 

Finally, leveraging $d_{1:N}$ we can rewrite Algorithm~\ref{alg:viboost} to use the weighted error $\varepsilon$ rather than a mode search.
Using these errors for ranking the base classifiers---as done in AdaBoost---decreases computation time significantly when searching for a new candidate base classifier.
The greedy search in VIBoost would then closely match AdaBoost's in computation time, thereby leading to an efficient algorithm with a similar runtime to AdaBoost.


\begin{figure}[t]
\begin{minipage}[b]{0.30\linewidth}
\centering
                \includegraphics[width=1.0\columnwidth]{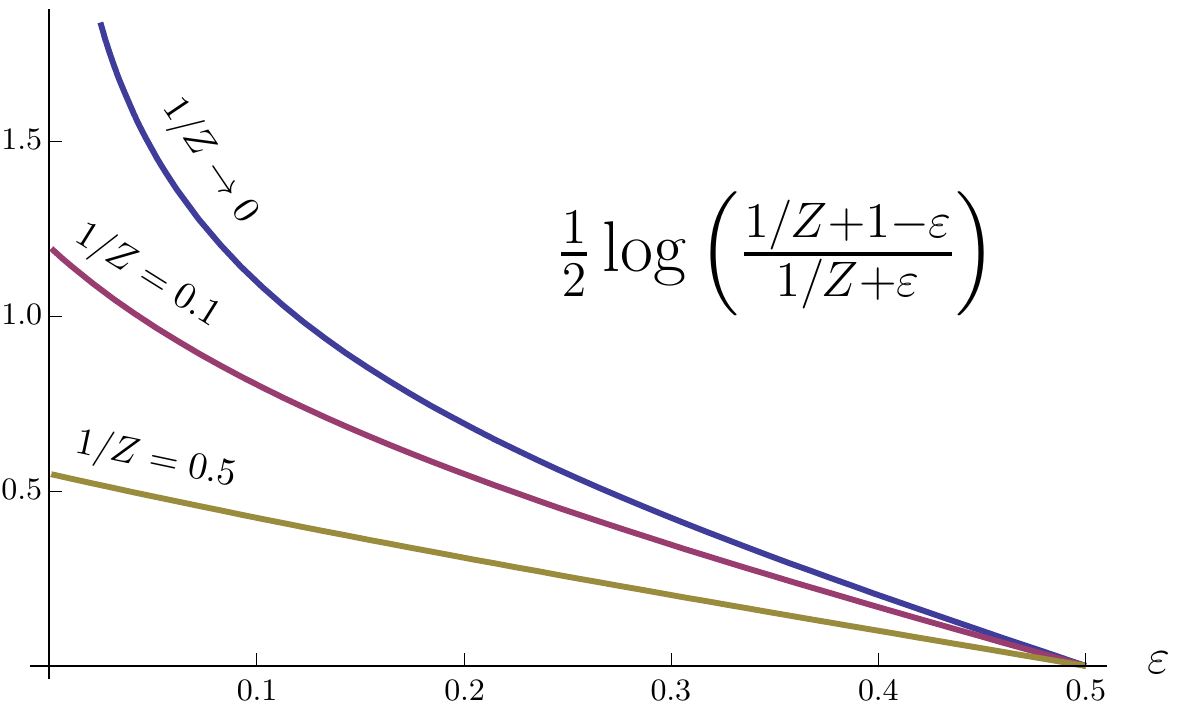}
                \caption{\small The effects of the versatile logistic prior on classifier weights ($\mu_0\!=\!w_{n}\!=\!\tau\!=\!1$)}
                \label{fig:1Z}
\end{minipage}
\hspace{0.03\linewidth}
\begin{minipage}[b]{0.30\linewidth}
\centering
                \includegraphics[width=1.0\columnwidth]{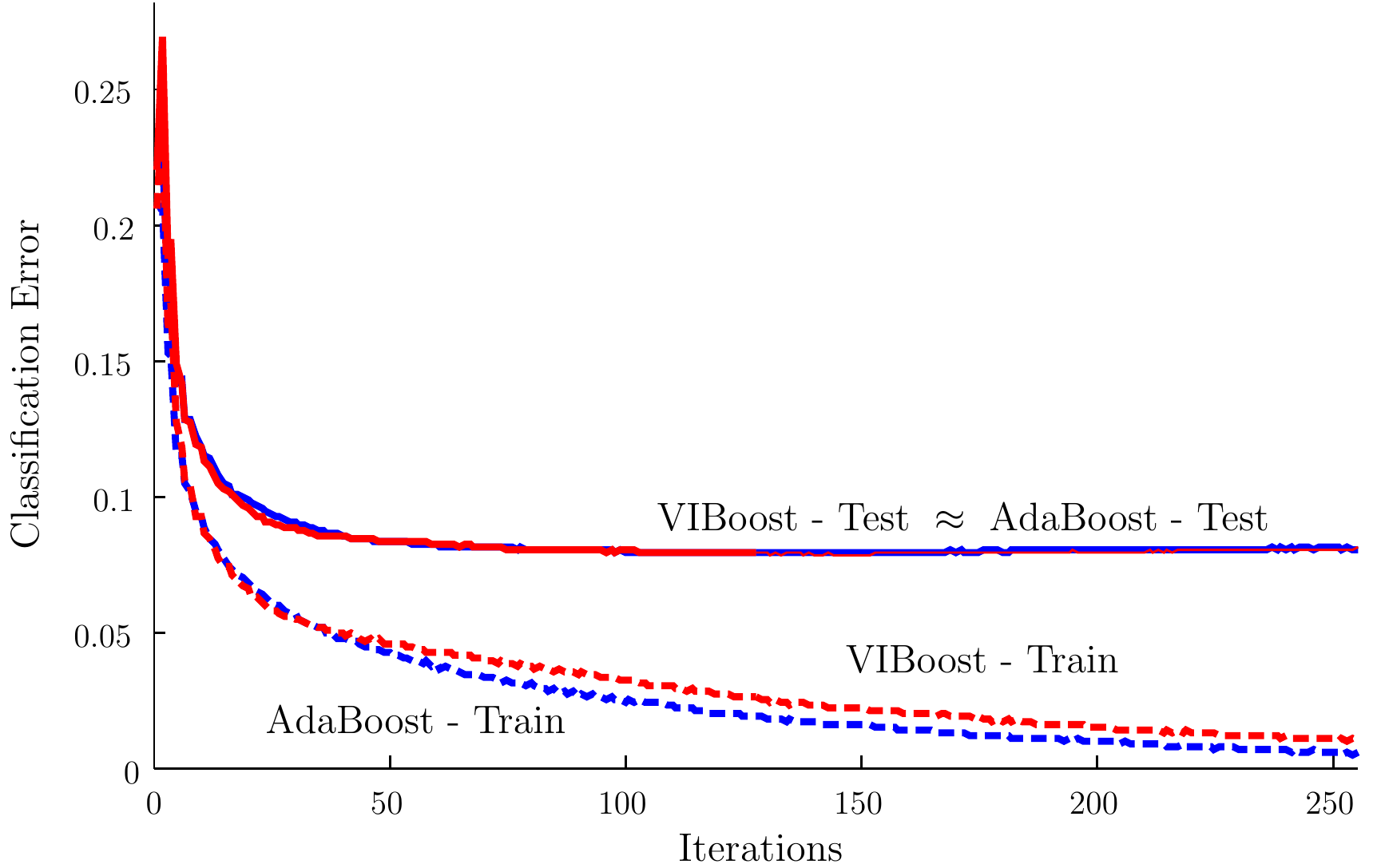}
                \caption{\small Spam dataset, classification error \\ \ }
                \label{fig:spamerr}
\end{minipage}
\hspace{0.03\linewidth}
\begin{minipage}[b]{0.30\linewidth}
\centering
                \includegraphics[width=1.0\columnwidth]{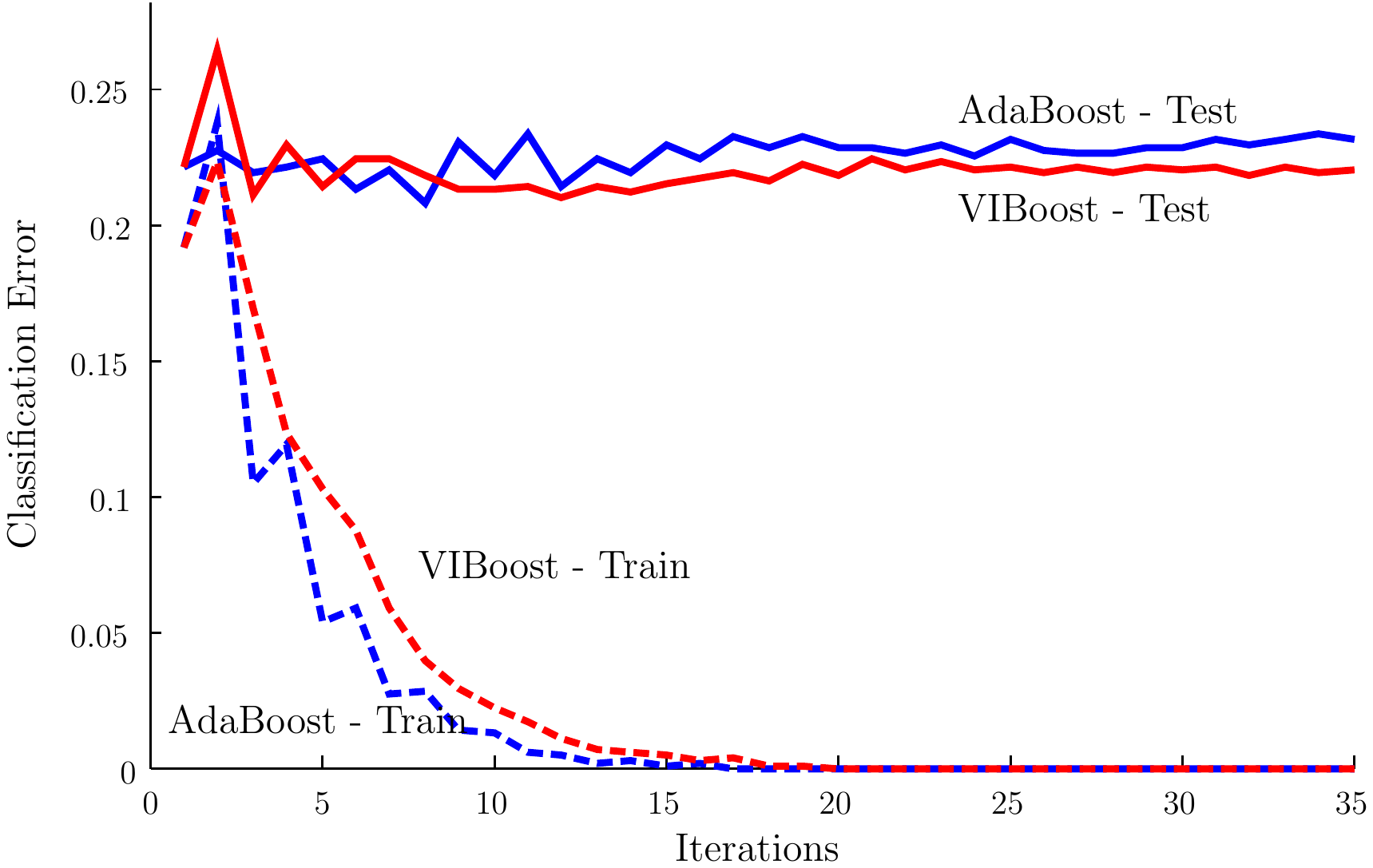}
                \caption{\small State dataset, classification error \\ \ }
                \label{fig:texterr}
\end{minipage}
\end{figure}

\section{Experiments}  \label{sec:exp}

We studied VIBoost on real and synthetic data.
We found that VIBoost works as well as AdaBoost for binary classification.
More importantly, we show that the variables accounting for label noise are a meaningful diagnostic of misfit.
For all experiments, our VIBoost initialization was $\mu_0\!=\!\mu'_0\!=\!\phi_{n}\!=\!\zeta_j\!=\!\eta_j\!=\!\tau\!=\!1$. 
Setting $\tau\!=\!1$ provides the closest means of comparison with AdaBoost. 
For all experiments we used decision stumps as our base classifiers.
Using the variational parameters of the type prior, we use $\eta_1/\eta_2$ for the SNR.
For the noise grade we use $\log(\omega_2/\omega_1)$, the mode associated with the approximate posterior.
All results presented are average values calculated over $40$ runs. 

As VIBoost outputs a classifier, we investigate classifier quality on two real-world, text datasets.
The first dataset is the $57$-feature \textit{spam} dataset \cite{ucimlr}.
With 6,401 examples, each run trained on a random $10\%$ and tested on the remaining $90\%$.  
The second dataset is a \textit{state} dataset \cite{gabrilovich2004} comprising 145 documents with 22,648 features (bag of words).
Instead of the word count, however, we used a present/absent binary value.
Each document relates to Illinois or Michigan.
We trained on $30\%$ and tested on the remaining $70\%$ (random splits).
Error results are featured in Figures \ref{fig:spamerr} and \ref{fig:texterr} and reveal that VIBoost and AdaBoost performed similarly.


In addition to a classifier, VIBoost also provides noise statistics.
Using a synthetic dataset, we now look at the algorithm's estimate of the posterior SNR ($\eta_1/\eta_2$) and the posterior noise grade (\,$\log(\omega_2/\omega_1)$\,) after $50$ iterations.
We simulated 100 examples on the real line with $\mathcal{X} = \{-99,-97,-95,\ldots,+99\}$.
Following the generative process of \S\ref{sec:model}, we constructed the \textit{step} dataset with $F(x)=+\infty$ for $x$ positive and $-\infty$ for $x$ negative ($+1$ label for $x$ positive and $-1$ label for $x$ negative).
With a noise grade of $\log 3 \approx 1.1$, we varied the type prior, $\theta$, of the generative process.
Figures \ref{fig:stepsnr} and \ref{fig:stepxi} respectively show the SNR and noise grade with varying $\theta$.
In a pure-noise situation ($\theta = 0$) the SNR is at its lowest and the noise grade is best estimated.
Conversely, in the absence of noise ($\theta = 1$) the SNR is at its greatest, rendering the noise grade estimate irrelevant. 


The last dataset we considered in this paper was also simulated.
Inspired by \cite{long2010}, we constructed a 1,200-example, 31-feature \textit{Long-Servedio} dataset---a dataset that provably ``breaks'' AdaBoost and many other algorithms with a convex loss minimization.
The details can be found in \cite[\S12.3]{schapire2012} and Matlab code is included in \S\ref{supp:sec:longserv}.
Each run comprised $200$ training examples and $1000$ testing examples (random splits).
Following \cite{schapire2012}, the noise level was set to $0.20$.  
In our context, this translates to a type prior of $0$ (always reassigning a random label) and a noise grade of $\approx -1.4$.
Not surprisingly, AdaBoost and VIBoost do not succeed in finding a decent classifier for this set.
However, the SNR values produced by VIBoost indicate that poor classification should be expected (Figure~\ref{fig:lssnr}).
As a result, the algorithm shifts its focus from classification to noise quantification.

As variational inference navigates through a vast set of auxiliary distributions, our only verifiable means of efficacy is provided by the ELBO (\S\ref{supp:sec:elbo}).
Empirically, we have noticed that ELBO increases are larger in the beginning main-loop iterations.
As the algorithm progresses, changes in the ELBO are quite small and sometimes negative (and small).
The small changes are expected because the composite classifier's accuracy is improving.
We hypothesize that the negative changes stem from the modal approximation used in a $w_n$-update expectation.
Alternatively, the ELBO requires a v-Log normalization constant, which we compute numerically and may be inexact.

\begin{figure}[t]
\begin{minipage}[b]{0.30\linewidth}
\centering
                \includegraphics[width=1.0\columnwidth]{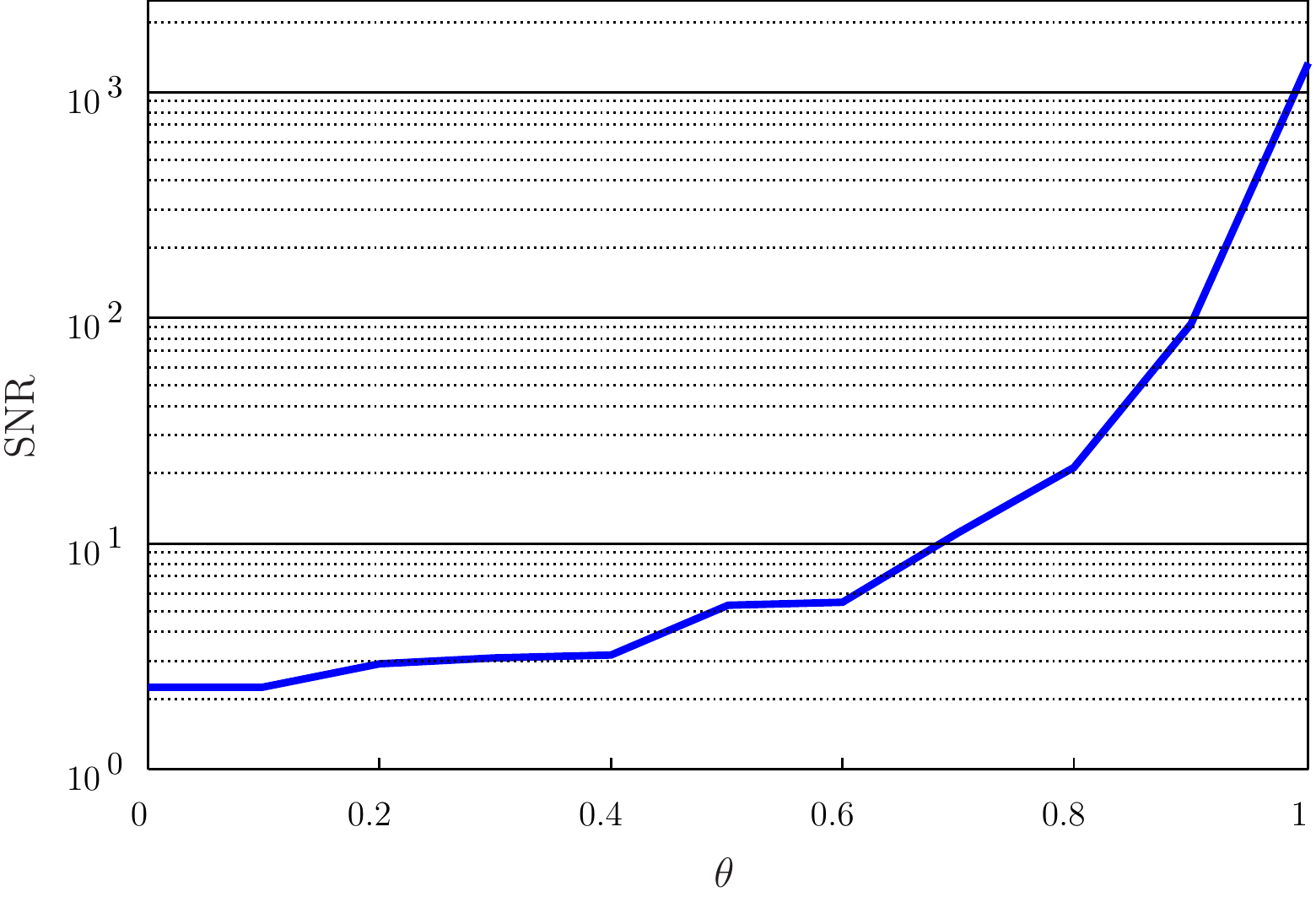}
                \caption{\small Step dataset, Signal-to-noise ratio}
                \label{fig:stepsnr}
\end{minipage}
\hspace{0.03\linewidth}
\begin{minipage}[b]{0.30\linewidth}
\centering
                \includegraphics[width=1.0\columnwidth]{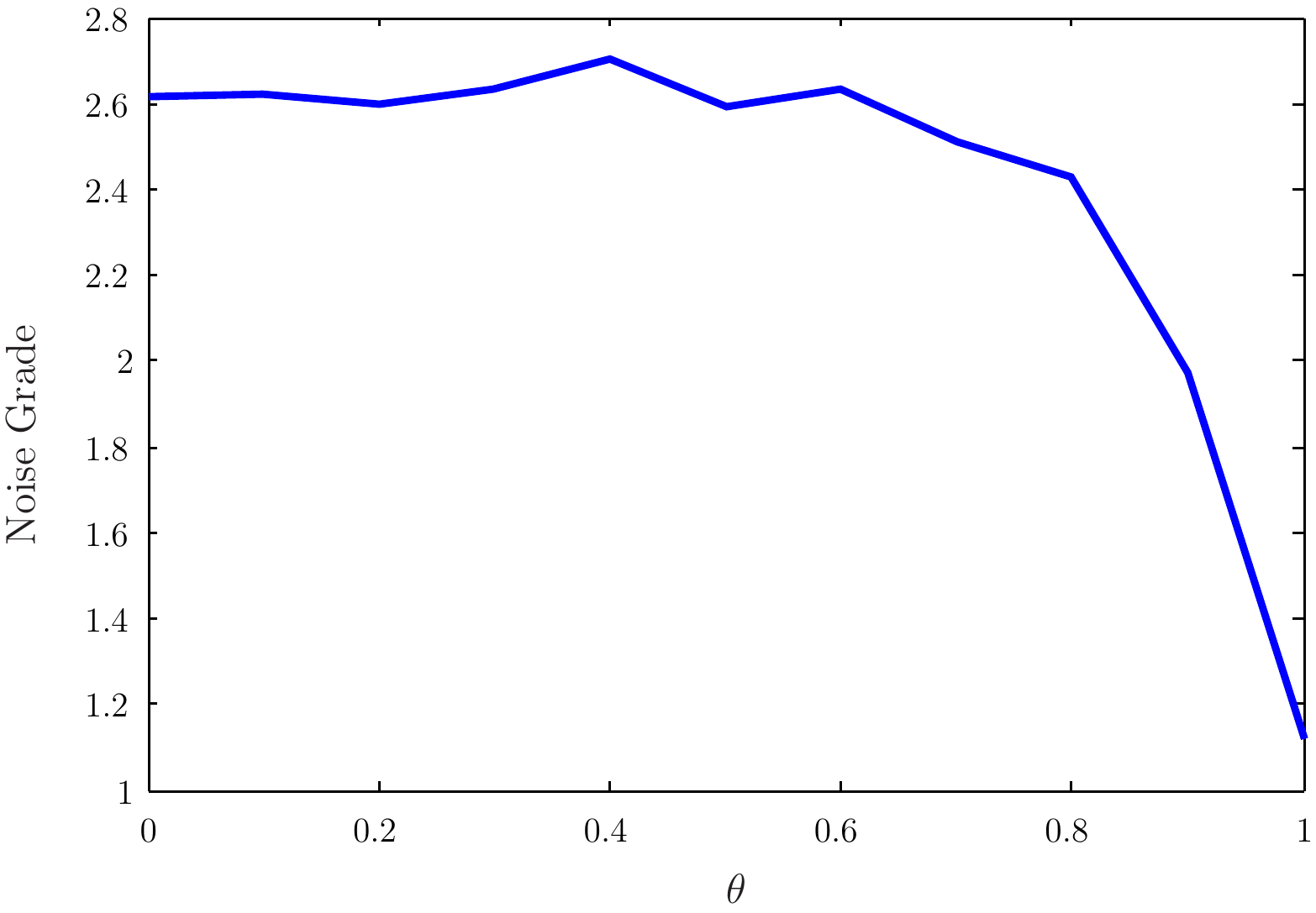}
                \caption{\small Step dataset, Noise Grade}
                \label{fig:stepxi}
\end{minipage}
\hspace{0.03\linewidth}
\begin{minipage}[b]{0.30\linewidth}
\centering
                \includegraphics[width=1.0\columnwidth]{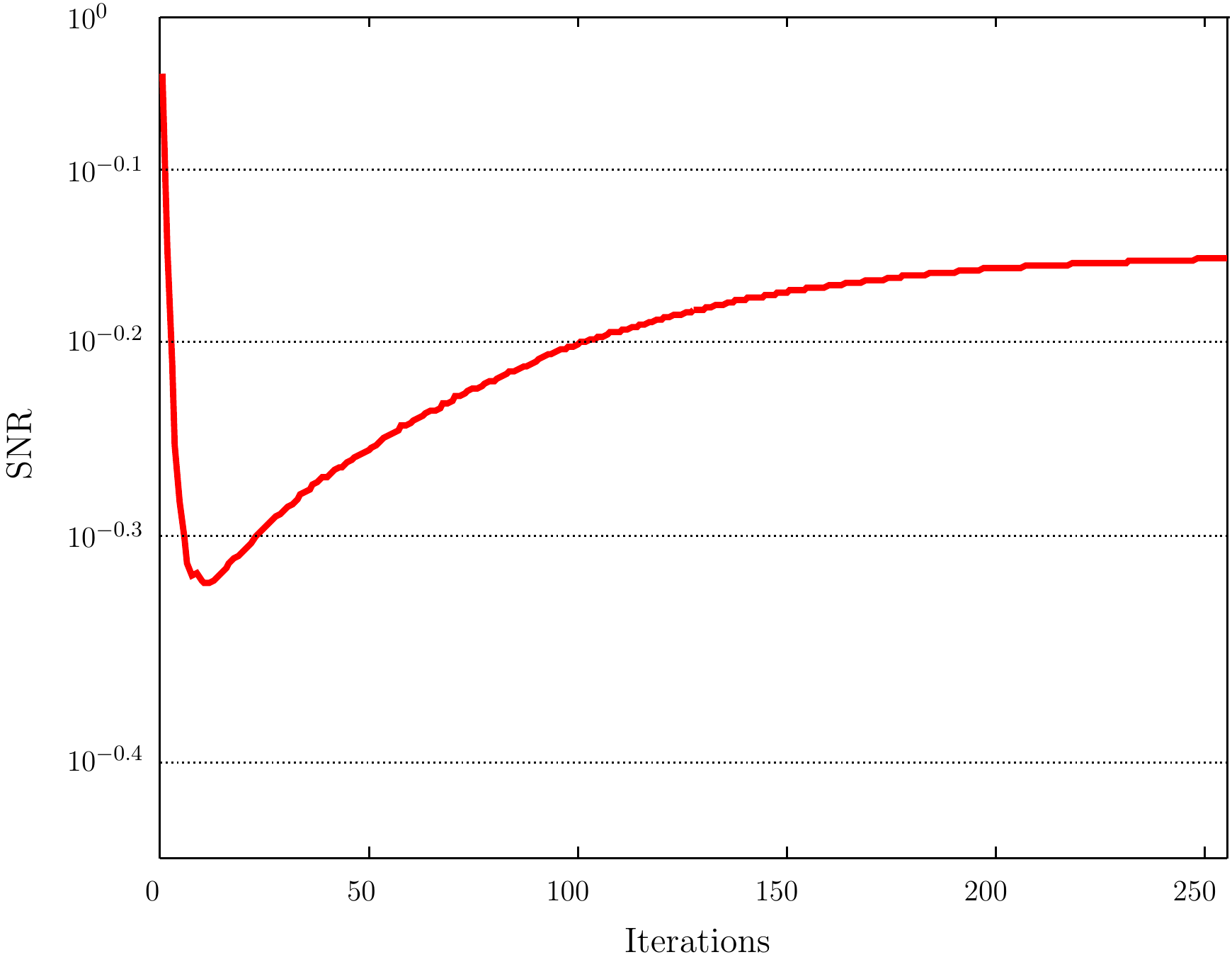}
                \caption{\small Long-Servedio dataset, Signal-to-noise ratio}
                \label{fig:lssnr}
\end{minipage}
\end{figure}

%
\section{Conclusion}  \label{sec:con}

We have developed a new boosting-like algorithm.
VIBoost attempts to fit a posterior distribution by applying variational inference to a dynamic model.
We began with a model centered around the binary classification problem and augmented it hierarchically to account for noise.

We did not set out to improve AdaBoost.
In addition to forming a binary classifier, the Bayesian model facilitated a label noise extension and we were able to extract information beyond classification.
We have observed experimentally that a good classifier is accompanied by a large SNR.
The SNR may explain why a poor classifier is returned by the learning algorithm.
We demonstrated this by analyzing the Long-Servedio dataset.

This model and accompanying algorithm are fertile ground for future work.
This paper did not address multi-class problems or regression.
We can also extend our model by forming connections between instances, base classifiers, and classifier weights (currently, these three features are conditionally independent given the labels).
We can also form dependencies between instances and label types, modeling varying levels of noise throughout the instance space.

\nocite{jordan1995}
\nocite{chen1995}
{ \small
\bibliography{refs}
\bibliographystyle{plain}
}


\newpage
\supplementarymaterial


\section{Properties of the Versatile Logistic}   \label{supp:sec:vlog}

The set of nonnegative reals is denoted $\REAL_+$.
For vectors $\bbeta,\bgamma\in \REAL^K$ and $\bmu \in \REAL_+^K$ let
\begin{align}
  f(z) &= \prod_{k=1}^K \left( \frac{1}{1+\exp[\beta_k(z - \gamma_k)]} \right)^{\mu_k} = \prod_{k=1}^K r_k(z) \ ,
  \label{supp:eq:vlogx1}
\end{align}
where $r_k(z) = \left( \frac{1}{1+\exp[\beta_k(z - \gamma_k)]} \right)^{\mu_k}$.
We note that 
\begin{align}
  0 &\leq \frac{1}{1+\exp[\beta_k(z - \gamma_k)]} \leq 1 \\ 
  0 &\leq \left(\frac{1}{1+\exp[\beta_k(z - \gamma_k)]}\right)^{\mu_k} = r_k(z) \leq 1 
\end{align}
and
\begin{align}
  r_k(z) &= \left(\frac{1}{1+\exp[\beta_k(z - \gamma_k)]}\right)^{\mu_k} 
      \leq \left(\frac{1}{\exp[\beta_k(z - \gamma_k)]}\right)^{\mu_k} 
	  = \exp[-\mu_k\beta_k(z - \gamma_k)] \ . \label{supp:eq:vlogx2}
\end{align}

\begin{lem}
  The integral $I=\int_{-\infty}^{+\infty} f(z) dz$ is finite if and only if there exists an $i$ and $j$ such that $\beta_i > 0$, $\mu_i>0$, $\beta_j < 0$, and $\mu_j > 0$.
\end{lem}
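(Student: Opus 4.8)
The plan is to reduce integrability to tail behaviour and then read off the two tails from the signs of the slopes. Since $f$ is continuous and satisfies $0 \le f(z) \le 1$ (because $0 \le r_k(z) \le 1$ for each factor, as recorded in the excerpt), $f$ is bounded on every compact interval; hence $I$ is finite if and only if both $\int_{0}^{+\infty} f$ and $\int_{-\infty}^{0} f$ are finite. I would therefore split $I$ at the origin and handle the two tails separately. Note the substitution $z \mapsto -z$ sends the parameters $(\bbeta,\bgamma,\bmu)$ to $(-\bbeta,-\bgamma,\bmu)$, swapping the roles of positive and negative slopes, so the left tail is the mirror image of the right; I would nonetheless treat both explicitly for clarity.

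For sufficiency, suppose there exist $i$ with $\beta_i > 0,\ \mu_i > 0$ and $j$ with $\beta_j < 0,\ \mu_j > 0$. On the right tail I would bound $f(z) \le r_i(z) \le \exp[-\mu_i\beta_i(z-\gamma_i)]$, using $r_k(z)\le 1$ for every other factor together with the exponential bound $r_i(z)\le\exp[-\mu_i\beta_i(z-\gamma_i)]$ from the excerpt; since $\mu_i\beta_i > 0$ this dominating function is integrable on $[0,+\infty)$. On the left tail I would use index $j$ in the same way: $f(z)\le r_j(z)\le\exp[-\mu_j\beta_j(z-\gamma_j)]$, and because $\beta_j < 0$ the coefficient $-\mu_j\beta_j$ is positive, so $\exp[-\mu_j\beta_j(z-\gamma_j)]\to 0$ as $z\to-\infty$ and is integrable on $(-\infty,0]$. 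Adding the two finite tail integrals to the bounded contribution of the compact middle yields finiteness of $I$.

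For necessity I would argue by contraposition. Suppose, say, no index has both $\beta_i > 0$ and $\mu_i > 0$, so every factor satisfies $\beta_k \le 0$ or $\mu_k = 0$. As $z\to+\infty$ each $r_k(z)$ has a genuine limit: it is $1$ when $\beta_k < 0$ or $\mu_k = 0$, and $(1/2)^{\mu_k}$ when $\beta_k = 0,\ \mu_k>0$. Being a finite product of factors each converging, $f(z)$ converges to a strictly positive constant $c$, so $f(z) > c/2 > 0$ for all sufficiently large $z$, forcing $\int_{0}^{+\infty} f = +\infty$. The mirror argument (divergence at $-\infty$) covers the case where no negative-slope factor with positive multiplicity exists. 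Hence failure of either condition makes $I$ infinite, and combined with sufficiency this establishes the equivalence.

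The only step needing genuine care is the necessity direction: I must observe that $f$ is a \emph{finite} product of factors each possessing a limit at the tail, so that $\lim_{z\to+\infty} f(z)$ exists and equals the product of those limits rather than merely being bounded below. This rules out oscillation and is what converts "no exponentially decaying factor" into "convergence to a positive constant," which is the crux of the argument; the remaining estimates are routine applications of the two bounds already available in the excerpt.
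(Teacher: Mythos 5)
Your argument is correct, and it differs from the paper's in a way that actually streamlines it. For sufficiency, both proofs rest on the same two facts ($f \le r_k$ for every $k$, and the exponential bound $r_k(z) \le \exp[-\mu_k\beta_k(z-\gamma_k)]$), but the paper bounds $f$ by $\min\{r_i,r_j\}$ and splits the integral at the point $t$ where $r_i(t) = r_j(t)$, which obliges it to prove that $t$ exists and is unique (via a derivative computation and the intermediate value theorem). You split at the origin instead, using $f \le r_i$ on $[0,+\infty)$ and $f \le r_j$ on $(-\infty,0]$; since each dominating exponential is integrable on its half-line no matter where the split is made, the crossing-point machinery is unnecessary and your version simply eliminates it. For necessity, both proofs observe that if one of the two sign conditions fails then $f$ converges to a strictly positive constant on the corresponding tail, so that tail integral diverges; the paper first discards constant factors ($\beta_k = 0$ or $\mu_k = 0$) by a without-loss-of-generality reduction, whereas you keep them and note that each contributes a positive constant limit (either $1$ or $(1/2)^{\mu_k}$), so the finite product still has a strictly positive limit. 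The two treatments are equivalent, but yours is marginally more self-contained, and your explicit remark that a finite product of factors, each having a limit, itself converges (ruling out oscillation) is precisely the point that the paper's terser ``$f(z)$ will approach $1$'' glosses over.
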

\begin{proof}
  If there is a $k$ such that $\beta_k = 0$ or $\mu_k=0$ then $r_k(z)$ is a constant and does not contribute to the finiteness of the integral. 
  Therefore, without loss of generality we can assume that none of the $\beta_k$ or $\mu_k$ are zero.
  \\ ($\Rightarrow$) With $r_k(z)$ nonnegative and bounded by $1$ we have $f(z) \leq r_k(z)$ for all $k$ and $z$.   
  It follows that
  \begin{align}
	I &\leq \int_{-\infty}^{+\infty} \min_k r_k(z) dz \\
	&\leq \int_{-\infty}^{+\infty} \min\{ r_i(z) , r_j(z) \} dz \\
	&= \int_{-\infty}^{t} r_j(z) dz +  \int_{t}^{+\infty} r_i(z) dz \ ,
  \end{align}
  where $t$ is the unique solution to $r_i(t) = r_j(t)$. 
  To show that $t$ exists, let $g(z) = r_j(z) - r_i(z)$.
  We have $\lim_{z \to -\infty} g(z) = 0 - 1 = -1$ and $\lim_{z \to +\infty} g(z) = 1 - 0 = 1$.
  Also, 
  \begin{align}
	g'(z) &= -\mu_j \beta_j (1+\exp[\beta_j(z-\gamma_j)])^{-\mu_j-1}(\exp[\beta_j(z-\gamma_j)]) \notag \\ &\qquad + \mu_i \beta_i (1+\exp[\beta_i(z-\gamma_i)])^{-\mu_i-1}(\exp[\beta_i(z-\gamma_j)]) > 0 \ .
  \end{align}
  With $g$ increasing it crosses the $z$-axis once (intermediate value theorem) at $t$.
  From \eqref{supp:eq:vlogx2} we have
  \begin{align}
	I &\leq \int_{-\infty}^{t} \exp[-\mu_j\beta_j(z-\gamma_j)] dz 
	       +  \int_{t}^{+\infty}  \exp[-\mu_i\beta_i(z-\gamma_i)] dz
  \end{align}
  is finite because each integral on the right-hand side is an integral of an exponential tail.
  \\ ($\Leftarrow$) Assume all of the $\beta_k$ are negative.
  As $z \to +\infty$ $f(z)$ will approach $1$.
  So there exists a $z_1 \in \REAL$ such that $f(z) \geq 1/2$ when $z \geq z_1$, leading to a divergent integral.
  The analogous case can be made for the $\beta_k$ all positive.
  The only other choice is that there is a $\beta_i>0$ and $\beta_j<0$.
\end{proof}

\begin{cor}
  The density $p(z) \propto f(z)$ is valid if and only if there exists an $i$ and $j$ such that $\beta_i > 0$, $\mu_i>0$, $\beta_j < 0$, and $\mu_j > 0$.
\end{cor}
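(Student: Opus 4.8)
The plan is to observe that this corollary is essentially a restatement of the preceding lemma, so the entire burden is carried by recognizing what ``valid density'' demands beyond mere finiteness of the integral. A function $f$ of the form \eqref{supp:eq:vlogx1} yields a valid density $p(z) \propto f(z)$ precisely when there is a positive, finite normalizing constant $c$ with $\int_{-\infty}^{+\infty} c\, f(z)\, dz = 1$; equivalently, when $f \geq 0$ everywhere and $0 < I < \infty$, where $I = \int_{-\infty}^{+\infty} f(z)\, dz$. The first step, then, is to record these two normalizability requirements explicitly.

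Next I would verify the positivity and nonnegativity conditions, which come essentially for free. Each factor $r_k(z) = \big(1/(1+\exp[\beta_k(z-\gamma_k)])\big)^{\mu_k}$ is strictly positive for every $z \in \REAL$, since the denominator $1 + \exp[\beta_k(z-\gamma_k)]$ is finite and strictly greater than one, and raising a strictly positive base to the power $\mu_k$ preserves strict positivity. Hence $f(z) = \prod_{k=1}^K r_k(z) > 0$ for all $z$. This immediately gives $I > 0$ (the integrand is strictly positive on a set of infinite measure), so the lower bound $0 < I$ holds unconditionally, and the normalizing constant $c = 1/I$ is well-defined whenever $I$ is finite.

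With positivity dispatched, the only remaining condition for validity is $I < \infty$. This is exactly the content of the Lemma, which characterizes finiteness of $I$ by the existence of indices $i,j$ with $\beta_i > 0$, $\mu_i > 0$, $\beta_j < 0$, and $\mu_j > 0$. Chaining the equivalences---validity $\iff$ ($f \geq 0$ and $0 < I < \infty$) $\iff$ $I < \infty$ (positivity being automatic) $\iff$ the slope/multiplicity condition---completes the argument.

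I expect no genuine obstacle here: the corollary is a direct consequence of the lemma. The one point deserving a sentence of care is the strict positivity of $f$, which guarantees both that the normalizing constant is finite and nonzero (so normalization is legitimate and does not produce a degenerate or undefined object) and that the sole substantive hypothesis is finiteness of $I$, already settled. Everything else is bookkeeping of the definition of a probability density.
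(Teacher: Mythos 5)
Your proposal is correct and follows the same route as the paper, which states the corollary without proof as an immediate consequence of the preceding lemma: validity of $p(z)\propto f(z)$ reduces to $0 < I < \infty$, strict positivity of $f$ makes $I>0$ automatic, and the lemma supplies the characterization of $I<\infty$. The only content you add is the explicit (and correct) bookkeeping that the paper left implicit.
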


\begin{lem}
  $\log r_k(z)$ is concave.
\end{lem}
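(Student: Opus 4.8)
The plan is to reduce the claim to the well-known convexity of the softplus function together with a routine second-derivative computation. Writing out the logarithm, we have
\begin{align}
  \log r_k(z) = -\mu_k \log\left( 1 + \exp[\beta_k(z-\gamma_k)] \right) \ .
\end{align}
Since $\mu_k \geq 0$, it suffices to show that the map $z \mapsto \log(1+\exp[\beta_k(z-\gamma_k)])$ is convex: multiplying a convex function by the nonpositive constant $-\mu_k$ yields a concave function.

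First I would dispose of the degenerate cases. If $\mu_k = 0$ or $\beta_k = 0$, then $\log r_k(z)$ is constant in $z$ and hence trivially concave, so I may assume $\mu_k > 0$ and $\beta_k \neq 0$ in what follows.

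The main computation is to differentiate twice. With $\sigma(u) = 1/(1+e^{-u})$ denoting the logistic sigmoid, a direct calculation gives
\begin{align}
  \frac{d}{dz} \log r_k(z) &= -\mu_k \beta_k \, \sigma(\beta_k(z-\gamma_k)) \ , \\
  \frac{d^2}{dz^2} \log r_k(z) &= -\mu_k \beta_k^2 \, \sigma(\beta_k(z-\gamma_k)) \left( 1 - \sigma(\beta_k(z-\gamma_k)) \right) \ ,
\end{align}
where I have used $\sigma'(u) = \sigma(u)(1-\sigma(u))$. Because $\mu_k > 0$, $\beta_k^2 > 0$, and $\sigma \in (0,1)$ forces $\sigma(1-\sigma) > 0$, the second derivative is strictly negative everywhere, which establishes (strict) concavity.

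There is no genuine obstacle here: the statement is essentially a one-line consequence of the convexity of $\log(1+e^u)$, which itself follows from $\frac{d^2}{du^2}\log(1+e^u) = \sigma(u)(1-\sigma(u)) > 0$ and the preservation of convexity under the affine precomposition $u = \beta_k(z-\gamma_k)$. The only care needed is to separate out the degenerate constant cases so that the sign claim for $-\mu_k \beta_k^2 \, \sigma(1-\sigma)$ is unambiguous. This concavity is precisely the behavior illustrated in Figure~\ref{fig:log1pez}, and it is what underlies the tight piecewise-linear lower bound exploited later in the paper.
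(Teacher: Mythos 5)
Your proof is correct and takes essentially the same route as the paper: both dispose of the degenerate cases $\beta_k=0$ or $\mu_k=0$ and then show the second derivative $-\mu_k\beta_k^2\,\sigma(\beta_k(z-\gamma_k))\bigl(1-\sigma(\beta_k(z-\gamma_k))\bigr)$ is negative, which is exactly the paper's expression $-\mu_k\,\beta_k^2 e^{-\beta_k(z-\gamma_k)}/(1+e^{-\beta_k(z-\gamma_k)})^2$ written in sigmoid notation. The framing via convexity of the softplus under affine precomposition is a cosmetic repackaging, not a different argument.
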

\begin{proof}
  If $\beta_k = 0$ or $\mu_k=0$ then $r_k(z)$ is a constant function, which is concave.
  Otherwise, we have
  \begin{align}
	\frac{d}{dz} \log r_k(z) &= -\mu_k \frac{ \beta_k\exp[\beta_k(z-\gamma_k)] }{1 + \exp[\beta_k(z-\gamma_k)] } = -\mu_k \frac{ \beta_k }{1 + \exp[-\beta_k(z-\gamma_k)] } \\
	\frac{d^2}{dz^2} \log r_k(z) &= -\mu_k \frac{ \beta_k^2 \exp[-\beta(z-\gamma_k)] }{(1 + \exp[-\beta_k(z-\gamma_k)])^2 } < 0
  \end{align}
  proving concavity.
\end{proof}

\begin{lem}
  The distribution $\text{v-Log}(\bbeta,\bgamma,\bmu)$ is unimodal. 
\end{lem}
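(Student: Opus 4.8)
The plan is to establish that the density is strictly log-concave and attains its maximum, from which unimodality follows at once. Writing the log-density as $\log f(z) = \sum_{k=1}^K \log r_k(z)$, I would invoke the preceding lemma, which shows each summand $\log r_k(z)$ is concave. A finite sum of concave functions is concave, so $\log f$ is concave on $\REAL$, i.e.\ $f$ is log-concave.

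Next I would argue existence and uniqueness of a maximizer. By the validity hypothesis (the Corollary above) there is an index $i$ with $\beta_i>0,\mu_i>0$ and an index $j$ with $\beta_j<0,\mu_j>0$. For these indices the second derivative computed in the previous lemma is \emph{strictly} negative, so $\log r_i$ and $\log r_j$ are strictly concave; hence $\log f$ is strictly concave and admits at most one maximizer. To see that one exists, I would use the tail bound \eqref{supp:eq:vlogx2}: the factor $r_j$ drives $\log f(z)\to-\infty$ as $z\to-\infty$ while $r_i$ drives $\log f(z)\to-\infty$ as $z\to+\infty$. A continuous, strictly concave function tending to $-\infty$ at both ends attains its supremum at a single interior point $z^\star$.

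Finally, unimodality is read directly off concavity: since $\log f$ is concave with unique maximizer $z^\star$, it is nondecreasing on $(-\infty,z^\star]$ and nonincreasing on $[z^\star,+\infty)$, and the monotone transformation $\exp(\cdot)$ preserves this pattern for $f$, hence for $p(z)\propto f(z)$. That is exactly the claim that $\text{v-Log}(\bbeta,\bgamma,\bmu)$ is unimodal. The limit computations are routine; the one point requiring care is coupling the validity condition to the existence of the maximizer, i.e.\ using $\beta_i>0,\beta_j<0$ (with positive multiplicities) to guarantee the strictly concave log-density decays on both tails rather than being monotone, which is where the hypothesis does the essential work.
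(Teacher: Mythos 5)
Your proof is correct and follows essentially the same route as the paper's: concavity of $\log p(z)$ via the preceding lemma on $\log r_k$, combined with validity of the density to rule out monotone behavior, yielding a single mode. The only difference is one of detail---the paper compresses your explicit tail-decay and strict-concavity arguments into the single sentence that a valid concave log-density must ``increase and then decrease,'' so your version simply makes rigorous what the paper leaves implicit.
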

\begin{proof}
  If $p(z)$ is the associated density then we wish to show that $p(z)$ has one critical point.
  Since $p(z)>0$, $\log p(z)$ will have the same critical points as $p(z)$ because $d(\log p(z))/dz = p'(z)/p(z)$.
  From the previous Lemma, $\log p(z)$ is concave.
  Being a valid density over the reals, concavity ensures that $\log p(z)$ will have one critical point as it increases and then decreases.
\end{proof}

\subsection{On $\text{v-Log}(\beta \bar{\bu} , \gamma \vone , [\mu_1,\mu_2]^T)$} \label{supp:sec:vlog11}

Recall that $\bar{\bu} \triangleq [+1,-1]^T$ and we will assume $\beta > 0$.
Let $V \sim \text{Beta}(\mu_1,\mu_2)$ and $g(v) = \gamma + \frac{1}{\beta}\log\left(\frac{1}{v}-1\right)$.
The function $g$ is monotonic and maps $[0,1]$ to $\REAL$.
The inverse function is $g^{-1}(z) = \frac{1}{1 + \exp[\beta(z-\gamma)]}$.
Note that $1-g^{-1}(z) =  \frac{1}{1 + \exp[-\beta(z-\gamma)]}$.
If $Z = g(V)$ then we have
\begin{align}
  p_Z(z) &= \frac{ p_V(v) }{ \vert g'(v) \vert } 
  = \frac{\Upgamma(\mu_1+\mu_2)}{\Upgamma(\mu_1)\Upgamma(\mu_2)} \frac{ v^{\mu_1-1}(1-v)^{\mu_2-1} }{ \frac{1}{\beta v (1-v)} }
  = \frac{\Upgamma(\mu_1+\mu_2)}{\Upgamma(\mu_1)\Upgamma(\mu_2)} \beta v^{\mu_1}(1-v)^{\mu_2} 
  \\ &
  = \frac{\Upgamma(\mu_1+\mu_2)}{\Upgamma(\mu_1)\Upgamma(\mu_2)}  \beta \left( \frac{1}{1 + \exp[\beta(z-\gamma)]} \right)^{\mu_1} \left( \frac{1}{1 + \exp[-\beta(z-\gamma)]} \right)^{\mu_2} \ . 
\end{align}
Thus, $Z \sim \text{v-Log}(\beta \bar{\bu} , \gamma \vone , [\mu_1,\mu_2]^T)$.
The normalization constant is 
\begin{align}
  \frac{1}{\beta}\frac{\Upgamma(\mu_1)\Upgamma(\mu_2)}{\Upgamma(\mu_1+\mu_2)}  \ .
\end{align}

\subsection{The Exponential Family}

A density of the form 
\begin{align}
  p(z \mid \bEta) = h(z) \exp\{ \bEta^T \bt(z) - a(\bEta) \}
  \label{supp:eq:conj}
\end{align}
is said to belong to the exponential family. 
If we set $h(z) = 1$, define the sufficient statistics
\begin{align}
  \bt(z) \triangleq \begin{bmatrix} \log\left(1 + e^{\beta_1 (z - \gamma_1)}\right) \\ 
	                                                 \vdots \\
								   \log\left(1 + e^{\beta_K (z - \gamma_K)}\right) \end{bmatrix}
								   \in \REAL^{K} \ \ ,
  \label{supp:eq:suffstat}
\end{align}
and set the natural parameters $\bEta = -\bmu \in \REAL^K$, then $\text{v-Log}(\bbeta,\bgamma,\bmu)$ is a member of the exponential family.
Observe:
\begin{align}
  h(z) \exp\{ \bEta^T \bt(z) - a(\bEta) \} 
    &= 
	\exp\left\{ -\sum_{k=1}^K \mu_k \log\left(1 + e^{\beta_k (z - \gamma_k)}\right) - a(-\bmu) \right\}
	\\ &=
	e^{-{a(-\bmu)}} \prod_{k=1}^K \left( \frac{1}{1 + e^{\beta_k (z - \gamma_k)}} \right)
	\\ &\propto
	\prod_{k=1}^K \left( \frac{1}{1 + e^{\beta_k (z - \gamma_k)}} \right) \ \ .
\end{align}

\newpage
\section{Accounting for label inversions}   \label{supp:sec:inversion}

Let $\brho = [\rho_1,\rho_2,\rho_3]^T$ denote a probability vector and let $v \in \{-1,+1\}$ denote a label.
We now form $y$, a stochastic mapping of $v$, as follows: 
\begin{enumerate}
  \item With probability $\rho_1$, $y \leftarrow v$ [equality]
  \item With probability $\rho_2$, $y \leftarrow -v$ [inversion]
  \item With probability $\rho_3$, $y \leftarrow \begin{cases} +1 & \text{\ \ w/ prob\ } r \\ -1 & \text{\ \ w/ prob\ } 1-r \end{cases}$\ \  [independent Bernoulli trial]
\end{enumerate}

As outlined above, the formation of $y$ from $v$ possesses $3$ degrees of freedom: $r$ and two elements of $\brho$. 
We can create the equivalent stochastic mapping: 
\begin{enumerate}
  \item With probability $\theta=2\rho_1 + \rho_3 - 1$, $y \leftarrow v$ [equality]
  \item With probability $\bar{\theta} = 1-\theta = 2-2\rho_1-\rho_3$, \\[2ex] $\quad$ $y \leftarrow \begin{cases} +1 & \text{\ \ w/ prob\ } s = \frac{1-\rho_1-\rho_3(1-r)}{2-2\rho_1-\rho_3} \\ -1 & \text{\ \ w/ prob\ } 1-s \end{cases}$\ \  [independent Bernoulli trial] \ \ .
\end{enumerate}

To show equivalence, we have:
\begin{align}
  P( Y = +1 \mid V = +1 ) &= \theta + \bar{\theta} s \quad [= 1-a]
    \\ &= 
	(2\rho_1 + \rho_3 - 1) + (2-2\rho_1-\rho_3)\frac{1-\rho_1-\rho_3(1-r)}{2-2\rho_1-\rho_3} 
	\\ &= \rho_1 + \rho_3 r \\
	P( Y = -1 \mid V = +1 ) &= \bar{\theta} (1-s)  \quad [= a] 
	\\ &=
	(2-2\rho_1-\rho_3)\left( 1 - \frac{1-\rho_1-\rho_3(1-r)}{2-2\rho_1-\rho_3} \right)
	\\ &= 1 - \rho_1 - \rho_3 + \rho_3 (1-r) 
	\\ &= \rho_2 + \rho_3 (1-r)  \\
	P( Y = +1 \mid V = -1 ) &= \bar{\theta} s  \quad [= b]
    \\ &=
	(2-2\rho_1-\rho_3)\frac{1-\rho_1-\rho_3(1-r)}{2-2\rho_1-\rho_3} 
	\\ &= 1 - \rho_1 - \rho_3 + \rho_3 r 
	\\ &= \rho_2 + \rho_3 r \\
	P( Y = -1 \mid V = -1 ) &= \theta + \bar{\theta} (1-s)  \quad [= 1-b]
    \\ &= 
	(2\rho_1 + \rho_3 - 1) + (2-2\rho_1-\rho_3)\left(1 -\frac{1-\rho_1-\rho_3(1-r)}{2-2\rho_1-\rho_3} \right)
	\\ &= (2\rho_1 + \rho_3 - 1) + \rho_2 + \rho_3 (1-r) 
	\\ &= \rho_1 + \rho_3 (1-r) \ \ .
\end{align}
The above also describes a Binary Asymmetric Channel (BAC) \cite{moser2009} with parameters $a$ and $b$ (See Figure~\ref{fig:bac}).
When we expect a balanced dataset, i.e., the expected number of $+1$ and $-1$ labels are equal, the ratio of true labels to noisy labels is
\begin{align}
  \frac{ \frac{N}{2}(1-a) + \frac{N}{2}(1-b)}{\frac{N}{2}a+ \frac{N}{2}b}
    &=
	\frac{2 - (a+b)}{(a+b)}
	=
	\frac{2 - \bar{\theta}}{\bar{\theta}}
	=
	\frac{1 + \theta}{1 - \theta} \ ,
\end{align}
which is lower bounded by $1$.
Looking ahead, $\theta$ represents a random quantity with expectation $\frac{\eta_1}{\eta_1+\eta_2}$.
Using this expectation in place of $\theta$, the above ratio becomes
\begin{align}
  \frac{1 + \frac{\eta_1}{\eta_1+\eta_2}}{1 - \frac{\eta_1}{\eta_1+\eta_2}}
	  &=
	\frac{2 \eta_1 + \eta_2}{\eta_2}
	 =
	1 + 2\frac{\eta_1}{\eta_2} 
	 =
	 1 + 2\frac{ \EX[ \Theta ]}{ \EX[  1- \Theta ]} ,
\end{align}
thus motivating the use of $\frac{ \EX \Theta }{ \EX[  1- \Theta ]}$.
\begin{figure}[ht]
\begin{center}
\centerline{\includegraphics[scale=1.0]{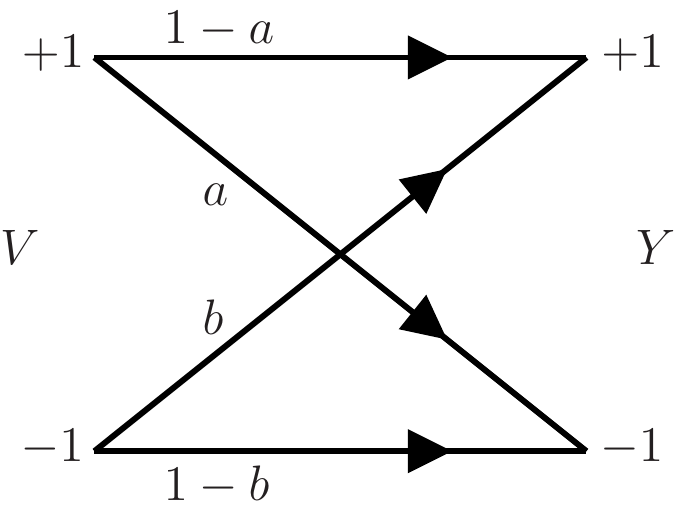}}
\caption{The Binary Asymmetric Channel, $P( Y \mid V)$.  }
\label{fig:bac}
\end{center}
\end{figure} 

\newpage
\section{The Gibbs Sampler}  \label{supp:sec:gibbs}

This section refers to the original noise model.
The joint, denoted $\mathcal{J}$, is
\begin{align}
  \mathcal{J} &\propto 
	\left(\frac{1}{1+e^\xi}\right)^{\mu_0'} \left(\frac{1}{1+e^{-\xi}}\right)^{\mu_0'}
	 \times
	 \theta^{\zeta_1 - 1} (1-\theta)^{\zeta_2 - 1}
	  \notag \\ &\quad
	 \times
	 \prod_{m=1}^M \left(\frac{1}{1+e^{c_m}}\right)^{\mu_0} \left(\frac{1}{1+e^{-{c_m}}}\right)^{\mu_0}
	 \times
	 \prod_{n=1}^N \theta^{w_{n}} (1-\theta)^{1-w_{n}}
	  \\ &\quad
	 \times
	 \prod_{n=1}^N 
	   \left(\frac{1}{1 + \exp\left[-y_n \sum_{m=1}^M c_m f_m(\bx_n)\right]}  \right)^{w_{n}}
	   \left(\frac{1}{1 + \exp\left[-y_n \xi \right]}  \right)^{1-w_{n}} \notag
  \label{supp:eq:mcmcjoint}
\end{align}

For variable $z$, let $\mathcal{J}[z]$ denote the distribution $z$ with all other variables fixed.
Starting with $c_i$, we have
\begin{align}
  \mathcal{J}[c_i] &\propto 
	    \left(\frac{1}{1+e^{c_i}}\right)^{\mu_0} \left(\frac{1}{1+e^{-{c_i}}}\right)^{\mu_0}
	  \\ &\quad
	 \times
	 \prod_{n=1}^N 
	   \left(\frac{1}{1 + \exp\left[-y_n \left( c_i f_i(\bx_n) + \sum_{m \neq i} c_m f_m(\bx_n)\right)\right]}  \right)^{w_{n}}
	  \\ &\quad
	 \\  &\equiv
	 \text{v-Log}\left(
	   \begin{bmatrix} +1 \\ -1 \\ -y_1 f_i(\bx_1) \\ \vdots \\ -y_n f_i(\bx_N)  \end{bmatrix} , 
		 \begin{bmatrix}  0 \\  0 \\ -\tilde{f}_i(\bx_1) f_i(\bx_1) \\ \vdots \\ -\tilde{f}_i(\bx_N) f_i(\bx_N)  \end{bmatrix} , 
		 \begin{bmatrix}  \mu_0 \\  \mu_0 \\ w_1 \\ \vdots \\ w_N  \end{bmatrix} \right)
\end{align}
where $\tilde{f}_i(\bx) = \sum_{m \neq i} c_m f_m(\bx_n)$.
Next we consider $\xi$:
\begin{align}
  \mathcal{J}[\xi] &\propto 
	\left(\frac{1}{1+e^\xi}\right)^{\mu_0'} \left(\frac{1}{1+e^{-\xi}}\right)^{\mu_0'}
	 \times
	 \prod_{n=1}^N 
	   \left(\frac{1}{1 + \exp\left[-y_n \xi \right]}  \right)^{1-w_{n}}
	 \\  &\equiv
	 \text{v-Log}\left(
	   \begin{bmatrix} +1 \\ -1 \end{bmatrix} , 
		 \begin{bmatrix}  0 \\  0  \end{bmatrix} , 
		 \begin{bmatrix} \mu_0' + \sum_{y_n = -1} (1-w_{n}) \\ \mu_0' + \sum_{y_n = +1} (1-w_{n})    \end{bmatrix} \right)
\end{align}
For $w_i$, we have
\begin{align}
  \mathcal{J}[w_i] &\propto 
	  \theta^{w_{i}} (1-\theta)^{1-w_{i}}
	  \notag \\ &\quad
	 \times
	   \left(\frac{1}{1 + \exp\left[-y_i \sum_{m=1}^M c_m f_m(\bx_i)\right]}  \right)^{w_{i}}
	   \left(\frac{1}{1 + \exp\left[-y_i \xi \right]}  \right)^{1-w_{i}}
	 \\  &\equiv
	 \text{Bernoulli}\left(
	 \frac{ \frac{\theta}{1 + \exp\left[-y_i \sum_{m=1}^M c_m f_m(\bx_i)\right]} }
	  {\frac{\theta}{1 + \exp\left[-y_i \sum_{m=1}^M c_m f_m(\bx_i)\right]} + 
		\frac{1-\theta}{1 + \exp\left[-y_i \xi \right]} } 
	 \right) \ .
\end{align}
Finally, for $\btheta$ we have
\begin{align}
  \mathcal{J}[\btheta] &\propto 
	 \theta^{\zeta_1 - 1} (1-\theta)^{\zeta_2 - 1} 
	 \times
	 \prod_{n=1}^N \theta^{w_{n}} (1-\theta)^{1-w_{n}}
	 \\  &\equiv
	 \text{Beta} \left( \zeta_1 + \sum_{n=1}^N w_{n} ,\zeta_2 + \sum_{n=1}^N (1-w_{n}) \right)  \ \ .
\end{align}
The Gibbs sampler is given in Algorithm~\ref{alg:gibbs}.

\begin{algorithm}[th]
   \caption{Gibbs Sampler}
   \label{alg:gibbs}
\begin{algorithmic}
   \STATE {\bfseries Input:} $\{(\bx_n,y_n)\}_{n=1}^N$, $\mathcal{F}$,$\mu_0\in\REAL_+$,$\mu_0'\in\REAL_+$,$\bzeta\in\REAL_+^3$
   \STATE Initialize $\bEta \in \REAL_+^2$, $\bomega \in \REAL_+^2$, and $\bphi \in [0,1]^N$
   \FOR{$t=1$ {\bfseries to} $T$}
   \vskip 0.5ex
	 \FOR{$i=1$ {\bfseries to} $M$}
	 \vskip 0.5ex
	 \STATE $c_i \sim \text{v-Log}\left(
	   \begin{bmatrix} +1 \\ -1 \\ -y_1 f_i(\bx_1) \\ \vdots \\ -y_n f_i(\bx_N)  \end{bmatrix} , 
		 \begin{bmatrix}  0 \\  0 \\ -\tilde{f}_i(\bx_1) f_i(\bx_1) \\ \vdots \\ -\tilde{f}_i(\bx_N) f_i(\bx_N)  \end{bmatrix} , 
		 \begin{bmatrix}  \mu_0 \\  \mu_0 \\ w_1 \\ \vdots \\ w_N  \end{bmatrix} \right)$
	 \vskip 0.5ex
	 \ENDFOR
	  \STATE $\xi \sim 
	   \text{v-Log}\left(
	   \begin{bmatrix} +1 \\ -1 \end{bmatrix} , 
		 \begin{bmatrix}  0 \\  0  \end{bmatrix} , 
		 \begin{bmatrix} \mu_0' + \sum_{y_n = -1} (1-w_{n}) \\ \mu_0' + \sum_{y_n = +1} (1-w_{n})    \end{bmatrix} \right)$
	 \FOR{$i=1$ {\bfseries to} $N$}
	 \vskip 0.5ex
	  \STATE $w_i \sim
	 \text{Bernoulli}\left(
	 \frac{ \frac{\theta}{1 + \exp\left[-y_i \sum_{m=1}^M c_m f_m(\bx_i)\right]} }
	  {\frac{\theta}{1 + \exp\left[-y_i \sum_{m=1}^M c_m f_m(\bx_i)\right]} + 
		\frac{1-\theta}{1 + \exp\left[-y_i \xi \right]} } 
	 \right) $
	 \vskip 0.5ex
	 \ENDFOR
	 \vskip 0.5ex
	  \STATE $\theta \sim
	     \text{Beta} \left( \zeta_1 + \sum_{n=1}^N w_{n} ,\zeta_2 + \sum_{n=1}^N (1-w_{n}) \right)$
	  \ENDFOR
   \STATE {\bfseries Output:} Samples from the posterior
\end{algorithmic}
\end{algorithm}

\newpage
\section{The Variational Updates}   \label{supp:sec:vi}
\subsection{The weight update ($c$)} \label{supp:sec:cupdate}

Isolating the terms of the log-joint ($\mathcal{L}$) involving $c$, we obtain
\begin{align}
  \mathcal{L}_1 &= \const  -\mu_0\log(1+e^c) - \mu_0\log(1+e^{-c})
     \notag \\ &\qquad
	 - \sum_{n=1}^N w_{n} \log(1 + \exp[-y_n(H(\bx_n){+}ch(\bx_n))]) 
\end{align}
We only require the expectation with respect to $w_{1:N}$:
\begin{align}
  \log q^*(c \mid \bbeta, \bgamma, \bmu)  &=  \const -\mu_0\log(1+e^c) - \mu_0\log(1+e^{-c}) 
    \notag \\ &\qquad
	 \label{supp:eq:updatec}
	 -\sum_{n=1}^N \phi_{n} \log(1 + \exp[-y_n(H(\bx_n){+}ch(\bx_n))])
\end{align}
Here we note that $\mp y_n\{H(\bx_n) {+} ch(\bx_n)\} = \mp y_n h(\bx_n) \{ c {+} H(\bx_n) h(\bx_n) \}$ from the fact that $h(\bx_n) \in \{-1,+1\}$.
To the exclusion of $\phi_n$, this manipulation presumes binary logistics, $\text{b-Log}(c,\mp y_n h(\bx_n),-H(\bx_n) h(\bx_n))$, and so conjugacy will come into play.
The form presented in \eqref{supp:eq:updatec} parametrizes a versatile logistic distribution with parameters of length $N+2$ given by
\begin{align}
  \bbeta(h) &\triangleq 
	\begin{bmatrix} +1 \\ -1 \\ -y_1 h(\bx_1) \\ \vdots \\ -y_N h(\bx_N) \end{bmatrix}
	   &
  \bgamma(H,h) &\triangleq 
	\begin{bmatrix} 0 \\ 0 \\ -H(\bx_1) h(\bx_1) \\ \vdots \\ -H(\bx_N) h(\bx_N) \end{bmatrix}
       &
  \bmu(\bphi) &\triangleq
	\begin{bmatrix} \mu_0 \\ \mu_0 \\ \phi_1 \\ \vdots \\ \phi_N \end{bmatrix}
    \label{supp:eq:bgm} \ \ \  .
\end{align}

Before proceeding to the next update, we address modal estimation of the versatile logistic.
Finding the mode requires minimizing the negative log of the density or ${\sum_{k=1}^K\,\mu_k \log(1+e^{\beta_k(z-\gamma_k)})}$ \ 
(the extraneous normalization constant is discarded).
The objective of interest is a weighted LogLoss \cite{collins2002} and minimizing it can be accomplished iteratively.
Alternatively, we can reason that for a fixed $k$, the quantity $\log(1+e^{\beta_k(z-\gamma_k)})$ contributes most to the mode wherever the exponential term is small.
Using a semi-tail approximation, we have $\log(1+e^{\beta_k(z-\gamma_k)}) \approx e^{\tau \beta_k(z-\gamma_k)}$ for positive scalar $\tau$.
Setting $\tau = 1$ best approximates the extreme part of the tail, whereas $\tau = 1/2$ will match the first derivative at $z = \gamma_k$.
If we restrict ourselves to slopes of equal magnitude, i.e., $\vert \beta_k \vert = \beta > 0$, we now minimize
\begin{align}
  e^{\tau\beta z} \sum_{\beta_k>0} \mu_k e^{-\tau\beta\gamma_k} 
  \ +\ e^{-\tau\beta z} \sum_{\beta_k<0} \mu_k e^{\tau\beta\gamma_k} . 
\end{align}
Taking the derivative with respect to $z$ and setting equal to zero yields the approximate mode
\begin{align}
  \alpha = 
  \frac{1}{2\tau\beta} \log \left( \frac{ 
           \sum_{k:\beta_k<0} \mu_k e^{+\tau\beta\gamma_k}
     }{ 
           \sum_{k:\beta_k>0} \mu_k e^{-\tau\beta\gamma_k}
	    } \right) \ .
		\label{supp:eq:appxmode}
\end{align}

\subsection{The noise grade update ($\xi$)}

Starting from the log-likelihood, we have
\begin{align}
  \mathcal{L}_2 &= \const - \mu_0'\log(1+e^\xi) - \mu_0'\log(1+e^{-\xi}) 
  -\sum_{n=1}^N (1-w_{n}) \log(1 + \exp[-y_n\xi]) \ .
\end{align}
We only require the expectation with respect to $w_{1:N}$:
\begin{align}
  \!\!\!\!
  \log q^*(\xi \mid \bomega) &= \const
  -\mu_0'\log(1{+}e^\xi) - \mu_0'\log(1{+}e^{-\xi})
  -\sum_{n=1}^N (1{-}\phi_{n}) \log(1{+}\exp[-y_n\xi])  .
\end{align}
The above parametrizes a versatile logistic with slope vector $\bar{\bu}$, knot vector $\vzero$, and respective multiplicities
\begin{align}
  \omega_1 &= \mu_0' + \sum_{n=1}^N (1-\phi_{n}) \indicator\{y_n = -1\} \\ 
  \omega_2 &= \mu_0' + \sum_{n=1}^N (1-\phi_{n}) \indicator\{y_n = +1\} \ . 
\end{align}

\subsection{The type update ($w_n$)}

The relevant terms of the log-likelihood are
\begin{align}
  \mathcal{L}_3 &= \const + w_n \log \theta + (1-w_n) \log (1-\theta)
   \notag \\ &\qquad 
   - w_{n} \log(1 + \exp[-y_n(H(\bx_n){+}ch(\bx_n))])
  - (1-w_{n}) \log(1 + \exp[-y_n\xi]) \label{supp:eq:lwn} .
\end{align}
Expectation with respect to $c$ is done via the approximate mode $\alpha$, i.e., we replace $c$ with $\alpha$.
We now consider $\text{v-Log}(\bar{\bu} , \vzero , [\omega_1,\omega_2])$; the approximate posterior of $\xi$.
Elementary calculus reveals the mode is ${\log(\omega_2/\omega_1)}$ which is in exact agreement with \eqref{supp:eq:appxmode} for $\tau = 1/2$.
Suppose $V \sim \text{Beta}(\omega_1,\omega_2)$.
If $Z = {\log( \frac{1}{V} - 1)}$, then it can be shown that $Z \sim \text{v-Log}(\bar{\bu} , \vzero , [\omega_1,\omega_2])$ (see \S\ref{supp:sec:vlog11}).
In this particular case, both distributions have the same normalization constant of $\frac{\Upgamma(\omega_1)\Upgamma(\omega_2)}{\Upgamma(\omega_1{+}\omega_2)}$ and leveraging the relationship of $V$ with $X$, we evaluate
\begin{align}
  \EX_z\{ \log(1+e^{+Z}) \} &= -\EX_v\{ \log V \} = \uppsi(\omega_0) - \uppsi(\omega_1) \label{supp:eq:vlog21}  \\
  \EX_z\{ \log(1+e^{-Z}) \} &= -\EX_v\{ \log(1{-}V) \} = \uppsi(\omega_0) - \uppsi(\omega_2) \label{supp:eq:vlog22}  \ ,
\end{align}
where $\omega_0 \triangleq \omega_1 + \omega_2$ and $\uppsi(\cdot)$ is the digamma function.
(The above two expectations can be used to derive the differential entropy.)

Taking the expectation of \eqref{supp:eq:lwn} with respect to $\xi$ using \eqref{supp:eq:vlog21} and \eqref{supp:eq:vlog22}, we obtain
\begin{align}
  \log q^*(w_n \mid \phi_n) &\approx  \const 
   + w_{n} [\uppsi(\eta_1) - \uppsi(\eta_0)] 
   + (1-w_{n}) [\uppsi(\eta_2) - \uppsi(\eta_0)] \notag 
	 \\ &\ -w_{n} \log(1 + \exp[-y_n(H(\bx_n){+}\alpha h(\bx_n))])   \notag 
     \\ &\ -(1-w_{n}) [\uppsi(\omega_0) - \uppsi(\omega_2)]\indicator\{y_n=+1\} \notag
     \\ &\ -(1-w_{n}) [\uppsi(\omega_0) - \uppsi(\omega_1)]\indicator\{y_n=-1\}  \label{supp:eq:qwn}\ ,
\end{align}
where $\eta_0 \triangleq \eta_1 + \eta_2 $.
If 
\begin{align}
  \!\!\!
  \kappa_{n}(H,h,\alpha) &\triangleq \frac{ \exp\left[ \uppsi(\eta_1) - \uppsi(\eta_2) + \uppsi(\omega_0) - \uppsi(\omega_2)\indicator{\{y_n=+1\}} - \uppsi(\omega_1)\indicator{\{y_n=-1\}} \right] }{ 1 + \exp[-y_n(H(\bx_n) + \alpha h(\bx_n))]} 
   \ , \label{supp:eq:kappa}
\end{align}
then \eqref{supp:eq:qwn} describes a Bernoulli distribution for $w_n$ with parameter $\phi_{n} = \kappa_{n}/(1+\kappa_{n})$.

\subsection{The type prior update ($\theta$)}

Again, starting from the log-likelihood, we have
\begin{align}
   \mathcal{L}_4 = \const +
	 (\zeta_1 - 1) \log \theta + (\zeta_2 - 1) \log (1-\theta) + 
	 \sum_{n=1}^N \left[ w_n \log \theta + (1-w_n) \log (1-\theta) \right]
\end{align}
Taking the expectation with respect to $w_{1:N}$ gives
\begin{align}
  \log q^*(\theta \mid \bEta) &= \const +
  \left( \zeta_1 - 1 + \sum_{n=1}^N \phi_n \right) \log \theta +
  \left( \zeta_2 - 1 + \sum_{n=1}^N (1-\phi_n) \right) \log (1-\theta)
\end{align}
which corresponds to a Beta distribution with parameters
\begin{align}
  \eta_1 &= \zeta_1 + \sum_{n=1}^N \phi_n  \\
  \eta_2 &= \zeta_2 + \sum_{n=1}^N (1-\phi_n) \ \ \ .
\end{align}


\newpage

\newpage

\section{The [Dynamic] ELBO}   \label{supp:sec:elbo}

We are interested in the additive change of the ELBO.
The log of the joint is
\begin{align}
  \mathcal{L} &= \const  
   - \mu_0'\log(1+e^\xi) - \mu_0'\log(1+e^{-\xi})   -\mu_0 \log(1+e^c) - \mu_0 \log(1+e^{-c}) 
        \notag \\ &\qquad
   + (\zeta_1-1) \log \theta + (\zeta_2-1) \log(1-\theta) 
   + \sum_{n=1}^N w_n \log \theta + (1-w_n) \log (1-\theta)
        \notag \\ &\qquad
   - \sum_{n=1}^N w_n \log(1 + \exp[-y_n (H(\bx_n)+ch(\bx_n))]) + (1-w_n) \log(1 + \exp[-y_n\xi]) 
  \label{supp:eq:smjointupdatelog1}
\end{align}
We now take the expectations with respect to the auxiliary distributions:
\begin{align}
  \mathcal{\EX_Q\{L\}}
  &= \const - \sum_{j=1}^2 \mu_0'(\uppsi(\omega_1+\omega_2)-\uppsi(\omega_j)) 
					 -\mu_0 \EX_c\{\log(1+e^c)\} - \mu_0 \EX_c\{\log(1+e^{-c})\} \notag \\
					 &\ - \sum_{j=1}^2 (\zeta_{j}-1) (\uppsi(\eta_1+\eta_2) - \uppsi(\eta_j) )
					 \notag \\ &\ 
  -\sum_{n=1}^N \phi_{n} (\uppsi(\eta_1+\eta_2) - \uppsi(\eta_1) ) + (1-\phi_{n}) (\uppsi(\eta_1+\eta_2) - \uppsi(\eta_2) ) \notag 
  \\
  &\ -\sum_{n=1}^N (1-\phi_{n}) \left[ (\uppsi(\omega_1+\omega_2) - \uppsi(\omega_1))\indicator\{y_n = -1\} + (\uppsi(\omega_1+\omega_2) - \uppsi(\omega_2))\indicator\{y_n = +1\} \right] \notag \\
	&\ -\sum_{n=1}^N
	\phi_{n} \EX_c\{\log(1 + \exp[-y_n(H(\bx_n) + ch(\bx_n))])\}  \\
  &= \const -  \sum_{j=1}^2 \mu_0'(\uppsi(\omega_1+\omega_2)-\uppsi(\omega_j))
					 - \sum_{j=1}^2 (\zeta_{j}-1) (\uppsi(\eta_1+\eta_2) - \uppsi(\eta_j) )
					 \notag \\ &\ 
  -\sum_{n=1}^N \phi_{n} (\uppsi(\eta_1+\eta_2) - \uppsi(\eta_1) ) + (1-\phi_{n}) (\uppsi(\eta_1+\eta_2) - \uppsi(\eta_2) ) \notag 
  \\
  &\ -\sum_{n=1}^N (1-\phi_{n}) \left[ (\uppsi(\omega_1+\omega_2) - \uppsi(\omega_1))\indicator\{y_n = -1\} + (\uppsi(\omega_1+\omega_2) - \uppsi(\omega_2))\indicator\{y_n = +1\} \right] \notag \\
  &\ -\sum_{k=1}^{N+2} \mu_k \EX_c\{\log(1+e^{\beta_k(c-\gamma_k)}\}
  \label{supp:eq:smjointupdatelog3} 
\end{align}
Expectation with respect to $c$ refers to v-Log$(\bbeta,\bgamma,\bmu)$.
Let $B_c$ denote the normalization constant for this density.
We now look at the entropy of the auxiliary distributions.
\begin{itemize}
  \item $c$:
	\begin{align}
         \log B_c + \sum_{k=1}^{N+2} \mu_k \EX_c\{\log(1+e^{\beta_k(c-\gamma_k)}\}
	\end{align}
  \item $\xi$: From \S\ref{supp:sec:vlog11}, we have
	\begin{align}
	  \log\left( \frac{ \Upgamma(\omega_1)\Upgamma(\omega_2) }{\Upgamma(\omega_1+\omega_2)} \right) +
	  (\omega_1+\omega_2) \uppsi(\omega_1+\omega_2) - \sum_{j=1}^2 \omega_j \uppsi(\omega_j)
	\end{align}
  \item $w_{1:N}$:
	\begin{align}
	  -\sum_{n=1}^N \phi_{n} \log \phi_{n} + (1-\phi_n) \log(1-\phi_n)
	\end{align}
  \item $\theta$:
	  \begin{align}
		\log\left( \frac{\Upgamma(\eta_1)\Upgamma(\eta_2)}{\Upgamma(\eta_1+\eta_2)} \right) + (\eta_1+\eta_2-2)\uppsi(\eta_1+\eta_2) -\sum_{j=1}^2 (\eta_j-1)\uppsi(\eta_j)
	  \end{align}
\end{itemize}
Combining into a single expression, we obtain
\begin{align}
  \textsc{ELBO} &= \const + \log B_c + 
      \sum_{j=1}^2 \log\Upgamma(\omega_j) - \log\Upgamma(\omega_0)
	  + (\omega_0 - 2\mu_0')\uppsi(\omega_0) - \sum_{j=1}^2 (\omega_j-\mu_0')\uppsi(\omega_j) 
	  \notag \\ &\qquad
      + \sum_{j=1}^2 \log\Upgamma(\eta_j) - \log\Upgamma(\eta_0)
	  + (\eta_0 - \zeta_0) \uppsi(\eta_0) - \sum_{j=1}^2 (\eta_j-\zeta_j)\uppsi(\eta_j) 
	  \notag \\ &\qquad
	  -\sum_{n=1}^N \left[ \phi_n \log \phi_n + (1-\phi_n) \log (1-\phi_n) \right]
	  -N \uppsi(\eta_0) + \sum_{n=1}^N \left[ \uppsi(\eta_1) \phi_{n} + \uppsi(\eta_2) (1 - \phi_n) \right]
	  \notag \\ &\qquad
	  - \uppsi(\omega_0) \sum_{n=1}^N (1-\phi_{n})
	  + \uppsi(\omega_1) \sum_{n:y_n=-1} (1-\phi_{n})
	  + \uppsi(\omega_2) \sum_{n:y_n=+1} (1-\phi_{n}) \ \ ,
  \label{supp:eq:elbo}
\end{align}
where the $0$-subscript denotes the vector sum (except for $\mu_0'$).

\subsection{Computing the normalization constant} \label{supp:sec:nc}

The change of the ELBO requires the computation of $\log B_c$.
Referring back to \eqref{supp:eq:vlogx1} we are interested in $B = \int_{-\infty}^{+\infty} f(z) dz$. 
To find $B$ we will employ numerical integration -- a reasonable approach for a function of a single variable. 
Now, consider 
\begin{align}
  g(z) &= -\log f(z) = \sum_{k=1}^K \mu_k \log\left( 1 + e^{\beta_k(z-\gamma)} \right) \ \ ,
\end{align}
and so $B = \int_{-\infty}^{+\infty} e^{-g(z)} dz$. 
The function $g$ is positive and convex.

One problem with using numerical integration blindly is that for large $z$, $\log(1+e^z)$ might return infinity.
For example, in evaluating $\log(1+e^{5000})$, computational software will first perform $e^{5000}$ and return a value of infinity.
Subsequently, adding one and taking the log will also return infinity.
This motivates the following Lemma.

\begin{lem}
  For $\POSP{z} \triangleq z \indicator\{z > 0\}$ we have $\log(1+e^z) = \log(1+e^{-\vert z \vert}) + \POSP{z}$.
\end{lem}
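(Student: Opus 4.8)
The plan is to verify the identity by a simple case split on the sign of $z$, since both $\vert z \vert$ and $\POSP{z} = z\indicator\{z>0\}$ collapse to elementary closed forms once the sign is fixed, and the only algebraic tool needed is $\log a + \log b = \log(ab)$. First I would treat $z \le 0$: here $\POSP{z} = 0$ and $\vert z \vert = -z$, so the right-hand side is $\log(1 + e^{-\vert z \vert}) + \POSP{z} = \log(1+e^{z}) + 0$, which is exactly the left-hand side (the boundary $z=0$ is absorbed into this branch, with both sides equal to $\log 2$). Next I would treat $z > 0$: now $\POSP{z} = z$ and $\vert z \vert = z$, so the right-hand side is $\log(1+e^{-z}) + z$. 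Rewriting $z = \log e^{z}$ and folding it inside the logarithm gives $\log\bigl(e^{z}(1+e^{-z})\bigr) = \log(e^{z} + 1)$, again matching the left-hand side.

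As a cleaner unified alternative I would also point out that one can factor $e^{\POSP{z}}$ out of $1+e^{z}$ directly, writing $1 + e^{z} = e^{\POSP{z}}\bigl(e^{-\POSP{z}} + e^{z - \POSP{z}}\bigr)$ and taking logs to obtain $\log(1+e^{z}) = \POSP{z} + \log\bigl(e^{-\POSP{z}} + e^{z-\POSP{z}}\bigr)$. A one-line check in each sign regime then shows the residual logarithm equals $\log(1+e^{-\vert z\vert})$: for $z>0$ the parenthesized sum is $e^{-z}+1$, and for $z\le 0$ it is $1 + e^{z}$, both of which are $1 + e^{-\vert z\vert}$.

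There is essentially no obstacle here: the statement is a numerical-stability reparametrization, so the content is computational rather than conceptual. The only point requiring care is the treatment of the kink at $z=0$, which I would make explicit by including the equality case in the $z \le 0$ branch. It is worth emphasizing in the proof why the identity is useful, namely that after applying it the term $e^{z}$ only ever appears as $e^{-\vert z\vert} \in (0,1]$, so the dangerous positive exponent is removed, which is precisely what makes the numerical evaluation of the v-Log normalization constant $B_c$ well-behaved.
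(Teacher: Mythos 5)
Your proof is correct and follows essentially the same route as the paper's: a case split on the sign of $z$, with the $z>0$ branch handled by writing $z = \log e^z$ and folding it into the logarithm to get $\log(e^z+1)$. The unified factoring via $e^{\POSP{z}}$ and the remarks on numerical stability are pleasant additions, but the underlying argument is the same.
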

\begin{proof}
  If $z \leq 0$ then $z = -\vert z \vert$ and $\POSP{z} = 0$ yielding equality.
  If $z > 0$ then $z = \vert z \vert$ and $\POSP{z} = z$.  We have $\log(1+e^{-z}) + z = \log(1+e^{-z}) + \log e^z = \log(1 + e^z)$. 
\end{proof}

Utilizing the above Lemma to evaluate $\log(1+e^z)$ ensures that infinite values are not returned from software.
Revisiting the previous example, $\log(1+e^{5000}) \rightarrow \log(1+e^{-5000}) + 5000 \approx 0 + 5000 = 5000$.

Now suppose that $1000$ is a lower bound on $g(z)$.
A numerical integration procedure would have to deal with numbers on the order of $e^{-1000}$, leading to a zero estimate of $B$.
To avoid this pitfall we translate $g(z)$.
Utilizing a Golden Section Search can produce the minimum value of $g(z)$.
Let $\bar{z}$ be the scalar such that $g'(\bar{z}) = 0$, i.e., $\bar{z}$ is the global minimizer of $g$.
We can now consider
\begin{align}
  B &= \int_{-\infty}^{+\infty} \exp[-g(z)] dz
     =  \int_{-\infty}^{+\infty} \exp[-g(z+\bar{z})] dz
	 \\ &= \exp[-g(\bar{z})] \int_{-\infty}^{+\infty} \exp[-(g(z+\bar{z}) - g(\bar{z})) ] dz \\ 
 \log B &= -g(\bar{z}) + \log\left( \int_{-\infty}^{+\infty} \exp[-(g(z+\bar{z}) - g(\bar{z})) ] dz \right) \ .
\end{align}
The translated function $\tilde{g}(z) =  g(z+\bar{z}) - g(\bar{z})$ is nonnegative with $0$ as the global minimizer and $\tilde{g}(0) = 0$. 

Our last step before using numerical integration is the contraction of the integration limits.
Using the substitution $u = \tan^{-1}x$ (or $x = \tan u$) we arrive at
\begin{align}
  B &=
  \exp[-g(\bar{z})] \int_{-\pi/2}^{+\pi/2}  \exp[-\tilde{g}(\tan u)] \sec^2(u) du \\ 
  \log B &=
  -g(\bar{z}) + \log\left(\int_{-\pi/2}^{+\pi/2} \exp[-\tilde{g}(\tan u)] \sec^2(u) du \right) \ .
\end{align}
This final integral serves as the input to a numerical integrator to produce $\log B$.

\newpage
\section{Approximate mode simplification }  \label{supp:sec:modeappx} 

Recall: $y_n^2 = 1$, $y_n h(\bx_n) = +1 \Leftrightarrow y_n = h(\bx_n)$ and $y_n h(\bx_n) = -1 \Leftrightarrow y_n \neq h(\bx_n)$.  We form
\begin{align}
  Z &= \sum_{n=1}^N \phi_{n} e^{-\tau y_n H(\bx_n)} & d_n &= \frac{\phi_{n} e^{-\tau y_n H(\bx_n)}}{Z} &
  \varepsilon &= \sum_{n=1}^N d_n \indicator\{y_n \neq h(\bx_n)\} \ .
\end{align}
We have
\begin{align}
  &\quad\frac{1}{2\tau} \log\left( \frac{ \mu_0
      + \sum_{n=1}^N \phi_{n} e^{-\tau H(\bx_n) h(\bx_n)} \indicator\{y_n = h(\bx_n)\} 
	        }{ \mu_0
      + \sum_{n=1}^N \phi_{n} e^{+\tau H(\bx_n) h(\bx_n)} \indicator\{y_n \neq h(\bx_n)\} 
	        }\right)
  \\ &=
       \frac{1}{2\tau} \log\left( \frac{ \mu_0
      + \sum_{n=1}^N \phi_{n} e^{-\tau y_n H(\bx_n) y_n h(\bx_n)} \indicator\{y_n = h(\bx_n)\} 
	        }{ \mu_0
      + \sum_{n=1}^N \phi_{n} e^{+\tau y_n H(\bx_n) y_n h(\bx_n)} \indicator\{y_n \neq h(\bx_n)\} 
	        }\right)
  \\ &=
       \frac{1}{2\tau} \log\left( \frac{ \mu_0
      + \sum_{n=1}^N \phi_{n} e^{-\tau y_n H(\bx_n)} \indicator\{y_n = h(\bx_n)\} 
	        }{ \mu_0
      + \sum_{n=1}^N \phi_{n} e^{-\tau y_n H(\bx_n)} \indicator\{y_n \neq h(\bx_n)\} 
	        }\right)
  \\ &=
       \frac{1}{2\tau} \log\left( \frac{ \mu_0/Z
      + \sum_{n=1}^N \phi_{n} e^{-\tau y_n H(\bx_n)} \indicator\{y_n = h(\bx_n)\}/Z 
	        }{ \mu_0/Z
      + \sum_{n=1}^N \phi_{n} e^{-\tau y_n H(\bx_n)} \indicator\{y_n \neq h(\bx_n)\}/Z
	        }\right)
  \\ &=
       \frac{1}{2\tau} \log\left( \frac{ \mu_0/Z
      + \sum_{n=1}^N d_n \indicator\{y_n = h(\bx_n)\} 
	        }{ \mu_0/Z
      + \sum_{n=1}^N d_n \indicator\{y_n \neq h(\bx_n)\}
	        }\right)
  \\ &=
       \frac{1}{2\tau} \log\left( \frac{ \mu_0/Z
      + \sum_{n=1}^N d_n (1 - \indicator\{y_n \neq h(\bx_n)\}) 
	        }{ \mu_0/Z
      + \sum_{n=1}^N d_n \indicator\{y_n \neq h(\bx_n)\}
	        }\right)
  \\ &=
       \frac{1}{2\tau} \log\left( \frac{ \mu_0/Z + 1
      - \sum_{n=1}^N d_n \indicator\{y_n \neq h(\bx_n)\}
	        }{ \mu_0/Z
      + \sum_{n=1}^N d_n \indicator\{y_n \neq h(\bx_n)\}
	        }\right)
  \\ &=
       \frac{1}{2\tau} \log\left( \frac{ \mu_0/Z + 1 - \varepsilon
	        }{ \mu_0/Z
      + \varepsilon
	        }\right)
\end{align}

\newpage
\section{Matlab code for Long-Servedio Data} \label{supp:sec:longserv}

The code below can be used to generate samples.  In the paper, we called the function with \texttt{n=10} and \texttt{eta=0.20}.

\lstinputlisting{longservedio.m}



\end{document}